\documentclass{article}

\usepackage[letterpaper, margin=1in]{geometry}
\usepackage{natbib}
\usepackage{parskip}  

\usepackage{amsmath}
\usepackage{amssymb}
\usepackage{bm}
\usepackage{mathtools}
\mathtoolsset{showonlyrefs=true} 
\usepackage{amsthm}
\usepackage[whole]{bxcjkjatype} 

\usepackage{algorithm}
\usepackage{algpseudocode}
\usepackage{xcolor}
\definecolor{algvariant}{HTML}{0B6E4E}
\usepackage{fvextra}
\definecolor{qualoutframe}{HTML}{8899BB}

\usepackage[inline]{enumitem}
\usepackage{listings}
\lstdefinestyle{chattemplate}{
  basicstyle=\ttfamily\tiny,
  columns=fullflexible,
  breaklines=true,
  frame=single,
  keepspaces=true,
  showstringspaces=false
}
\usepackage{hyperref}
\hypersetup{
  colorlinks   = true, 
  urlcolor     = [rgb]{0,0,1}, 
  linkcolor    = [rgb]{0,0,0.5}, 
  citecolor   = [rgb]{0,0,0.5} 
}

\makeatletter
\let\MYcaption\@makecaption
\makeatother

\usepackage{subcaption}
\makeatletter
\let\@makecaption\MYcaption
\makeatother

\usepackage[capitalize,noabbrev]{cleveref}
\usepackage{relsize}

\usepackage{booktabs}
\usepackage{array}
\usepackage{multirow}
\usepackage{tikz}
\usetikzlibrary{positioning,arrows.meta,fit,backgrounds,calc}
\theoremstyle{plain}

\newtheorem{proposition}{Proposition}
\newtheorem{lemma}{Lemma}
\newtheorem{corollary}{Corollary}
\newtheorem{definition}{Definition}

\theoremstyle{remark}

\crefname{theorem}{Theorem}{Theorems}
\Crefname{theorem}{Theorem}{Theorems}

\crefname{proposition}{Proposition}{Propositions}
\Crefname{proposition}{Proposition}{Propositions}

\crefname{lemma}{Lemma}{Lemmas}
\Crefname{lemma}{Lemma}{Lemmas}

\crefname{corollary}{Corollary}{Corollaries}
\Crefname{corollary}{Corollary}{Corollaries}

\crefname{definition}{Definition}{Definitions}
\Crefname{definition}{Definition}{Definitions}

\crefname{assumption}{Assumption}{Assumptions}
\Crefname{assumption}{Assumption}{Assumptions}

\crefname{remark}{Remark}{Remarks}
\Crefname{remark}{Remark}{Remarks}

\usepackage[textsize=tiny]{todonotes}

\newcommand{\argmax}{\mathop{\rm arg~max}\limits}

\usepackage{authblk}
\author[1]{Akiyoshi Tomihari\thanks{tomihari@g.ecc.u-tokyo.ac.jp}}
\author[1]{Issei Sato\thanks{sato@g.ecc.u-tokyo.ac.jp}}
\affil[1]{Department of Computer Science, The University of Tokyo}
\date{}

\begin{document}

\title{Power Distribution Bridges Sampling, Self-Reward RL, and Self-Distillation}
\maketitle

\begin{abstract}
Recent analyses question whether reinforcement learning (RL) is responsible for strong reasoning in large language models (LLMs).
At the same time, distillation and inference-time sampling, including power sampling, have emerged as effective ways to improve LLM performance.
However, the relationship among RL, distillation, and sampling remains unclear.
In this study, we focus on the power distribution, the target distribution of power sampling, and show that the power distribution bridges sampling, self-reward KL-regularized RL, and self-distillation.
From the sampling perspective, we show that inexpensive local approximations cannot reproduce sequence-level power without information about possible suffixes.
From the RL perspective, the power distribution is the closed-form optimizer of KL-regularized RL when the model's sequence-level log-probabilities are used as the reward.
This identification leads to \emph{power self-distillation}, an offline distillation surrogate that shares the same target distribution and amortizes the cost of power sampling into supervised training on teacher samples.
We further show that power self-distillation can achieve self-reward sharpening, while improvement in a downstream true reward is governed by the covariance between true reward and self-reward under the power distribution.
Experiments on reasoning tasks support our analysis: power sampling raises self-reward, true-reward gains depend on alignment with self-reward, and power self-distillation can match or exceed the performance of power sampling at much lower inference cost.
\end{abstract}

\section{Introduction}
\label{sec:introduction}
The strong reasoning ability exhibited by large language models (LLMs) has often been attributed to reinforcement learning (RL).
However, empirical analyses question whether RL explains emergent reasoning: as the number of sampled generations grows, post-RL models often fail to outperform their pre-RL counterparts, suggesting that RL may not be what endows LLMs with reasoning ability~\citep{yue2025does}.
At the same time, distillation has become a standard way to transfer the capabilities of expensive or stronger models to smaller models~\citep{hinton2015distilling,guo2025deepseek,busbridge2025distillation}, and inference-time compute allocated to sampling or search has improved LLM performance~\citep{snell2024scaling,welleck2024from}.

However, the relationship between sampling, RL, and self-distillation remains unclear.
In particular, \citet{karan2026reasoning} show that a base model, without additional training or external reward, can match or exceed post-RL models using \emph{power sampling}.
This raises the question of whether the success of power sampling reflects a mechanism distinct from RL and distillation, or whether these methods can be connected through a common structure.
Clarifying such a connection is important because it can reveal whether gains that appear to come from different procedures in fact arise from a common mechanism, and whether an expensive inference-time procedure can be converted into an offline training objective.

In this study, we show that sampling, RL, and self-distillation are naturally connected through the \emph{power distribution}.
As illustrated in~\Cref{fig:contribution}, this distribution is the target of power sampling, the closed-form optimum of a self-reward RL objective, and the teacher distribution amortized by self-distillation.
From the sampling perspective, a natural question is whether the effect of power sampling can be reproduced by an inexpensive token-level approximation.
We show that this is structurally difficult: per-token approximations cannot match the power distribution without sequence-level information.
From the RL perspective, the power distribution is the closed-form optimum of KL-regularized RL~\citep{ouyang2022instructgpt} when the reward is the model's sequence-level log-probabilities, i.e., the self-reward in the sense of \citet{huang2025selfimprovement}.
Finally, by rewriting this RL objective, we derive \emph{power self-distillation} as an offline distillation surrogate that shares the same target distribution and amortizes the cost of power sampling into offline training.
We further show that power self-distillation achieves sharpening, and that whether the resulting sharpening improves a true reward is determined by a reward covariance under the power distribution.

Our contributions are summarized as follows. \Cref{fig:contribution} illustrates the connection we study, and \Cref{tab:comparison} compares these axes with prior work.
\begin{itemize}[leftmargin=1.0em,topsep=0pt,itemsep=1pt,parsep=0pt,partopsep=0pt]
  \item We show that approximating power sampling at inference time is structurally hard: per-token approximations cannot match the power distribution without sequence-level information (\Cref{prop:pow-vs-temp-renyi,prop:opt-proposal-power}).
  \item We show that the power distribution is the closed-form optimum of KL-regularized RL with the model's sequence-level log-probabilities as the reward (\Cref{cor:self-reward-power}), and derive \emph{power self-distillation} by rewriting this RL objective, thereby amortizing expensive power sampling into offline training (\Cref{alg:offline-power-distill}).
  \item We provide a sharpening bound for power self-distillation (\Cref{prop:teacher-sft}), and characterize when the induced self-distillation improves a true reward through a covariance condition under the power distribution (\Cref{prop:Rprime-cov}).
\end{itemize}

\begin{figure}[t]
  \centering
  \footnotesize
  \resizebox{\linewidth}{!}{
\begin{tikzpicture}[
  font=\footnotesize,
  >=Stealth,
  node distance=8mm,
  method/.style={
    rectangle,
    rounded corners=3pt,
    draw=black!75,
    line width=0.65pt,
    align=center,
    inner sep=4pt,
    minimum height=23mm,
    text width=28mm,
  },
  centerbox/.style={
    rectangle,
    rounded corners=4pt,
    draw=black!80,
    line width=0.85pt,
    align=center,
    inner sep=5pt,
    minimum height=16mm,
    text width=28mm,
    fill=gray!6,
  },
  edge/.style={-, line width=0.75pt, shorten <=2pt, shorten >=2pt},
  reflabel/.style={
    font=\scriptsize,
    align=center,
    fill=white,
    inner sep=1.5pt,
    text=black!85,
  },
  deriveedge/.style={->, dashed, line width=0.7pt, draw=black!65, shorten <=2pt, shorten >=2pt},
]

\colorlet{samplingcolor}{orange!11}
\colorlet{rlcolor}{blue!9}
\colorlet{distillcolor}{teal!10}

\node[centerbox] (power) {%
  \textbf{Power distribution}\\[2pt]
  $\displaystyle \pi_\alpha(y\mid x)\propto \pi(y\mid x)^\alpha$
};

\node[method, fill=samplingcolor, left=24mm of power] (sampling) {%
  \textbf{Sampling}\\[2pt]
  Inference-time \\[-1pt]
  approximation of $\pi_\alpha$\\[2pt]
  \begin{tikzpicture}[x=3mm,y=8mm,baseline=-0.6ex]
    \draw[->, black!55] (-0.3,0) -- (7.3,0);
    \path[fill=blue!35, draw=blue!65, opacity=0.62]
      (0,0)
      .. controls (0.7,0.02) and (1.0,0.22) .. (1.7,0.45)
      .. controls (2.2,0.62) and (2.7,0.42) .. (3.1,0.08)
      .. controls (3.5,-0.02) and (4.0,-0.02) .. (4.4,0.08)
      .. controls (4.9,0.18) and (5.6,0.50) .. (5.8,0.58)
      .. controls (6.25,0.50) and (6.6,0.02) .. (7,0)
      -- cycle;
    \path[fill=orange!45, draw=orange!75, opacity=0.68]
      (0.7,0)
      .. controls (1.1,0.00) and (1.35,0.35) .. (1.7,0.96)
      .. controls (1.95,1.25) and (2.25,1.25) .. (2.5,0.96)
      .. controls (2.85,0.35) and (3.1,0.00) .. (3.5,0)
      -- cycle;
    \path[fill=orange!45, draw=orange!75, opacity=0.68]
      (4.7,0)
      .. controls (5.05,0.00) and (5.25,0.35) .. (5.8,1.02)
      .. controls (6.35,0.35) and (6.55,0.00) .. (6.9,0)
      -- cycle;
  \end{tikzpicture}
};

\node[method, fill=rlcolor, right=24mm of power] (rl) {%
  \textbf{RL}\\[2pt]
  KL-regularized RL\\[-1pt]
  with self-reward\\[2pt]
  \begin{tikzpicture}[x=1.75mm,y=7mm,baseline=-0.6ex]
    \draw[black!25] (-0.3,0) -- (8.3,0);
    \foreach \x/\h in {0/0.24,1/0.55,2/0.34}
      \fill[blue!25] (\x-0.22,0) rectangle (\x+0.22,\h);
    \draw[->, blue!70!black, thick] (3.0,0.35) -- (5.0,0.35);
    \foreach \x/\h in {6/0.12,7/0.92,8/0.22}
      \fill[blue!55] (\x-0.22,0) rectangle (\x+0.22,\h);
  \end{tikzpicture}
};

\node[
  method,
  fill=distillcolor,
  below=7mm of power,
  text width=56mm,
  minimum height=13mm,
] (distill) {%
  \textbf{Self-distillation}\\[2pt]
  Power self-distillation: SFT on samples $y\sim\pi_\alpha$\\[1pt]
  {\scriptsize sharpening: Prop.~\ref{prop:teacher-sft}\quad true reward: Prop.~\ref{prop:Rprime-cov}}
};

\draw[edge] (sampling) -- node[reflabel, sloped, below, font=\tiny] {limits of token-level\\approximation\\Prop.~\ref{prop:pow-vs-temp-renyi}, Prop.~\ref{prop:opt-proposal-power}} (power);

\draw[edge] (rl) -- node[reflabel, sloped, below, font=\tiny] {closed-form optimum\\Cor.~\ref{cor:self-reward-power}} (power);

\draw[edge] (power) -- node[reflabel, right=1pt] {offline teacher data\\Alg.~\ref{alg:offline-power-distill}} (distill);

\draw[deriveedge] (rl.south) to[bend left=14]
  node[reflabel, right=1pt] {derive Alg.~\ref{alg:offline-power-distill}}
  (distill.east);

\begin{scope}[on background layer]
  \node[
    rounded corners=5pt,
    fill=gray!3,
    draw=black!18,
    inner sep=5pt,
    fit=(sampling)(rl)(distill)(power)
  ] {};
\end{scope}

\end{tikzpicture}}
  \caption{Overview of our contribution. The power distribution connects sampling, KL-regularized RL, and self-distillation: it is the target of power sampling, the closed-form optimum of self-reward RL, and the teacher distribution amortized by power self-distillation.}
  \label{fig:contribution}
\end{figure}

\section{Related work}
\label{sec:related}

\noindent\textbf{RL post-training as distribution sharpening.}
RL has become a central tool in LLM post-training, including RL from human feedback (RLHF)~\citep{ouyang2022instructgpt} and RL with verifiable rewards (RLVR)~\citep{shao2024deepseekmath,guo2025deepseek,lambert2024tulu}. However, a growing line of work questions whether such RL induces genuinely new reasoning capabilities. \citet{yue2025does} showed that under \texttt{pass@k} evaluation, RLVR often improves sampling efficiency at small \(k\) but can underperform the base model at large \(k\), suggesting that RLVR concentrates probability mass on reasoning paths already present in the base model's distribution.
Complementing this view, \citet{he2025rewarding} analyzed a degenerate rank bias in GRPO that preferentially reinforces high-probability trajectories, yielding a ``distribution sharpening'' regime where simply sampling more from the base model can be stronger under the same sample budget.
Motivated by the perspective that many RL gains resemble distribution sharpening, \citet{karan2026reasoning} proposed a training-free inference-time method that targets sharpened distributions of the base model. Their approach uses a Metropolis--Hastings sampler to approximate sequence-level power sampling and achieves reasoning improvements comparable to RL.
\citet{azizi2026power} and \citet{ji2026scalable} developed lower-latency approximations to power sampling.
We complement this line of work by showing that the power distribution targeted by these samplers is also the closed-form optimum of a self-reward KL-regularized RL objective.

\noindent\textbf{Inference-time compute and distillation to amortize inference cost.}
Recent work argues that allocating additional computation at inference time can substantially improve LLM outputs~\citep{snell2024scaling,welleck2024from}.
When an external reward or verifier is available, a common method is Best-of-\(N\), which generates \(N\) candidates and selects the one with the highest reward; this simple strategy can yield strong empirical gains~\citep{stiennon2020learning,nakano2021webgpt,touvron2023llama,pmlr-v202-gao23h,eisenstein2023helping,mudgal2024controlled}.
To amortize the inference cost of Best-of-\(N\), several works characterized the distribution induced by Best-of-\(N\) selection and proposed to distill this distribution into a single policy~\citep{gui2024bonbon,amini2025variational,sessa2025bond,yang2025fasterwind}.
In contrast to these reward-based distillation methods, we derive a self-distillation objective that amortizes power sampling itself, using only samples from the base model's power distribution.

\noindent\textbf{Self-improvement without external rewards.}
A growing number of empirical studies suggest that language models can improve without relying on external rewards or human-provided labels, using self-generated data and intrinsic training signals. \citet{huang2023large,wang2023self} curated model-generated solutions or instructions and then fine-tuned on them.
Several works perform RL using internal feedback alone, such as entropy minimization objectives~\citep{prabhudesai2025maximizing} or confidence as the reward~\citep{zhao2026learning}. Even randomly assigned rewards can improve performance~\citep{shao2025spurious}.
\citet{huang2025selfimprovement} formalized LLM self-improvement as distribution sharpening and analyzed algorithms motivated by SFT and KL-regularized RL.
Building on this sharpening view, we show that the model's sequence-level log-probabilities induce the power distribution through KL-regularized RL, and that distilling this distribution can sharpen the model without external rewards.

\section{Preliminaries}
\label{sec:prelim}
\noindent\textbf{Notation.}
Let \(\mathcal{X}\) denote the space of prompts and let \(\mu\in\Delta(\mathcal{X})\) denote a distribution over prompts.
We consider completions of length \(T\ge 1\) over a finite vocabulary \(\mathcal{V}\), and write \(\mathcal{Y}:=\mathcal{V}^{T}\) for the completion space.
The base model is a policy \(\pi:\mathcal{X}\to\Delta(\mathcal{Y})\) and we write \(\pi(\cdot\mid x)\) for the conditional distribution of \(y\) given \(x\).
With \(y_{<t}:=(y_{1},\dots,y_{t-1})\), we use the autoregressive factorization \(\pi(y\mid x)=\prod_{t=1}^{T}\pi(y_{t}\mid x,y_{<t})\).
We write \(a\lesssim b\) to mean \(a=O(b)\) and \(\pi(S\mid x):=\sum_{y\in S}\pi(y\mid x)\) for a set \(S\subseteq\mathcal{Y}\).

\noindent\textbf{Self-improvement.}
Language models have been shown to be capable of self-improvement, improving their own performance without external rewards~\citep{huang2023large,wang2023self,prabhudesai2025maximizing,zhao2026learning}.
This phenomenon is counterintuitive and appears to contradict the data-processing inequality, which states that mutual information is non-increasing under further processing of random variables~\citep{cover1999elements}.
\citet{huang2025selfimprovement} reconcile these observations by interpreting improvements as \emph{computational}, not statistical: self-improvement sharpens the distribution so that sampling a near-optimal solution becomes easier.
This perspective connects to classical trade-offs between sampling and optimization in theoretical computer science~\citep{kirkpatrick1983optimization,lovasz2006fast}.

Formally, define the \emph{self-reward} as the log-likelihood
\begin{equation}
r_{\mathrm{self}}(x, y;\pi) = \log \pi(y\mid x) \label{eq:rself}
\end{equation}
 and let the corresponding maximizer set be
 \begin{equation}
     \bm{y}^\star(x):= \argmax_{y\in\mathcal{Y}}r_{\mathrm{self}}(x,y;\pi).\label{eq:ystar}
 \end{equation}
Given \((\epsilon,\delta)\in(0,1)^2\), a policy \(\widehat{\pi}\) is \((\epsilon,\delta)\)-sharpened relative to \(\pi\) if the following holds:
\begin{equation}
\label{eq:sharpened}
\mathbb{P}_{x\sim\mu}\Big[\widehat{\pi}\big(\bm{y}^\star(x)\mid x\big)\ge 1-\delta\Big]\ge 1-\epsilon.
\end{equation}
\citet{huang2025selfimprovement} analyze the sample complexity of achieving \((\epsilon,\delta)\)-sharpening when \(\pi\) is accessed only through conditional draws \(y\sim\pi(\cdot\mid x)\) and likelihood evaluations \(\pi(y\mid x)\), for supervised fine-tuning on Best-of-\(N\) targets sampled from \(\pi\) and for KL-regularized RL objectives driven by \(r_{\mathrm{self}}(x,y;\pi)\).

\noindent\textbf{Power distribution.}
Recent analyses of RL suggest that empirical reasoning gains resemble \emph{distribution sharpening}, where probability mass concentrates on trajectories already well supported under the base model~\citep{yue2025does,he2025rewarding}.
Motivated by this view, \citet{karan2026reasoning} target inference-time sampling from the \emph{power distribution} induced by the base model.

\begin{definition}[Power distribution]
\label{def:power-distribution}
With a policy \(\pi:\mathcal{X}\to\Delta(\mathcal{Y})\) and an exponent \(\alpha>1\), we define the \emph{power distribution} induced by \(\pi\) as
\begin{equation}
\label{eq:power-distribution}
\pi_\alpha(y\mid x)
:=
\frac{\pi(y\mid x)^{\alpha}}{\sum_{y'\in\mathcal{Y}}\pi(y'\mid x)^{\alpha}}.
\end{equation}
\end{definition}
Exact sampling from Eq.~\eqref{eq:power-distribution} is intractable at scale.
\citet{karan2026reasoning} therefore propose a Metropolis--Hastings (MH) procedure that achieves reasoning accuracy competitive with strong RL post-training~\citep{shao2024deepseekmath,guo2025deepseek}, without further training.
Lower-latency approximations have subsequently been proposed~\citep{azizi2026power,ji2026scalable}, but these methods still use substantially more inference-time compute than standard autoregressive sampling.

\section{Approximating power sampling requires sequence-level information}
\label{sec:sampling-limits}
In this section, we begin from the sampling perspective.
We ask whether the power distribution \(\pi_\alpha\) can be reproduced by inexpensive inference-time approximations, focusing on two natural local inference-time procedures: (i) a per-token tempered distribution (\Cref{sec:likelihood-mode-vanish}) and (ii) sequential importance sampling (SIS) with a one-step proposal (\Cref{subsec:opt-proposal-power}).
In both cases, the gap to \(\pi_\alpha\) is governed by sequence-level information that the local approximations do not access, showing why cheap inference-time approximations are structurally difficult and motivating the RL and self-distillation perspectives in \Cref{sec:rl-distillation}.

\subsection{Comparison to per-token temperature scaling}
\label{sec:likelihood-mode-vanish}

A natural way to locally approximate \(\pi_\alpha\) is to apply the same power transformation at the token level during decoding.
For \(s\in\mathcal{V}\), define the per-token tempered next-token distribution by
\begin{equation}
\label{eq:temp-marginal-token}
\pi_{\mathrm{temp},\alpha}(y_{t}=s \mid x, y_{<t})
:=
\frac{\pi(y_{t}=s \mid x, y_{<t})^{\alpha}}{\sum_{s'\in\mathcal{V}}\pi(y_{t}=s' \mid x, y_{<t})^{\alpha}}.
\end{equation}
In contrast, the power distribution in Eq.~\eqref{eq:power-distribution} is, more precisely, the \emph{sequence-level} power distribution \(\pi_\alpha(\cdot\mid x)\propto \pi(\cdot\mid x)^\alpha\), whose next-token conditional we denote by \(\pi_{\mathrm{pow},\alpha}\):
\begin{equation}
\label{eq:pow-marginal-token}
\pi_{\mathrm{pow},\alpha}(y_{t}=s \mid x, y_{<t})
:=
\frac{\sum_{y_{t+1:T}\in\mathcal{V}^{T-t}}\pi(y_{<t},\,s,\,y_{t+1:T}\mid x)^{\alpha}}{\sum_{y_{t:T}\in\mathcal{V}^{T-t+1}}\pi(y_{<t},\,y_{t:T}\mid x)^{\alpha}}.
\end{equation}
We show that for arbitrary suffix distributions, the entire odds-ratio gap between Eqs.~\eqref{eq:temp-marginal-token} and~\eqref{eq:pow-marginal-token} is controlled by the Rényi entropy of the suffix.

\begin{proposition}[Power vs.\ temperature odds ratios via suffix Rényi entropies]
\label{prop:pow-vs-temp-renyi}
For \(\alpha>1\), a prompt \(x\), a prefix \(y_{<t}\), and \(a\in\mathcal{V}\), let \(q_{t,a}\) denote the conditional distribution of the suffix \(Y_{t+1:T}\) under the base model,
\[
q_{t,a}(y_{t+1:T}):=\pi(y_{t+1:T}\mid x,y_{<t},y_t=a).
\]
For a distribution \(p\) on a finite set, define the Rényi entropy of order \(\alpha\) as
\(
H_{\alpha}(p):=1/(1-\alpha)\log\sum_{z} p(z)^{\alpha}.
\)
Then for any \(a,b\in\mathcal{V}\) such that \(\pi(y_{t}=a\mid x, y_{<t})>0,\pi(y_{t}=b\mid x, y_{<t})>0\), the ratio of next-token odds under \(\pi_{\mathrm{pow},\alpha}\) versus \(\pi_{\mathrm{temp},\alpha}\) satisfies
\begin{equation}
\label{eq:pow-temp-ratio-pair}
\frac{\pi_{\mathrm{pow},\alpha}(y_{t}=a \mid x, y_{<t})}{\pi_{\mathrm{pow},\alpha}(y_{t}=b \mid x, y_{<t})}
\bigg/
\frac{\pi_{\mathrm{temp},\alpha}(y_{t}=a \mid x, y_{<t})}{\pi_{\mathrm{temp},\alpha}(y_{t}=b \mid x, y_{<t})}
=
\exp\!\bigl((1-\alpha)\,(H_{\alpha}(q_{t,a})-H_{\alpha}(q_{t,b}))\bigr).
\end{equation}
\end{proposition}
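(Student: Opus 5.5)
The plan is a direct computation: factor the sequence probability at position \(t\) and express each sum over suffixes through a Rényi entropy. Fix \(x\) and \(y_{<t}\), and abbreviate \(p(s):=\pi(y_t=s\mid x,y_{<t})\) and \(P:=\pi(y_{<t}\mid x)\). By the autoregressive factorization, for every suffix \(z=y_{t+1:T}\),
\[
\pi(y_{<t},s,z\mid x)=P\,p(s)\,q_{t,s}(z).
\]
Raising this to the power \(\alpha\) and summing over \(z\), the numerator of Eq.~\eqref{eq:pow-marginal-token} becomes
\[
P^\alpha\,p(s)^\alpha\sum_z q_{t,s}(z)^\alpha .
\]
By the definition of \(H_\alpha\), for \(\alpha>1\) we have \(\sum_z q_{t,s}(z)^\alpha=\exp\bigl((1-\alpha)H_\alpha(q_{t,s})\bigr)\).

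Next I take the odds ratio under \(\pi_{\mathrm{pow},\alpha}\) for two tokens \(a,b\) with \(p(a),p(b)>0\). The denominator of Eq.~\eqref{eq:pow-marginal-token} does not depend on \(s\), so it cancels, and the factor \(P^\alpha\) cancels as well. This leaves
\[
\frac{p(a)^\alpha}{p(b)^\alpha}\cdot\frac{\exp\bigl((1-\alpha)H_\alpha(q_{t,a})\bigr)}{\exp\bigl((1-\alpha)H_\alpha(q_{t,b})\bigr)}.
\]
Under \(\pi_{\mathrm{temp},\alpha}\) in Eq.~\eqref{eq:temp-marginal-token}, the normalizer cancels in the same way, so the odds are simply \(p(a)^\alpha/p(b)^\alpha\). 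Dividing the two odds ratios removes the \(p(\cdot)^\alpha\) factors and yields Eq.~\eqref{eq:pow-temp-ratio-pair} directly.

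The only real obstacle is well-definedness. Because \(p(a),p(b)>0\), the conditionals \(q_{t,a}\) and \(q_{t,b}\) are genuine probability distributions, so each \(\sum_z q(z)^\alpha\) lies in \((0,1]\) and its logarithm is finite. The prefix probability \(P\) must also be positive so that both next-token conditionals are defined; this is implicit in conditioning on \(y_{<t}\), and it makes every numerator positive so the ratios are legitimate. The edge case \(t=T\), where the suffix is empty, needs a separate check: there \(q\) is a point mass, so \(H_\alpha=0\) and the identity holds trivially.
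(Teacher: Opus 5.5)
Your proposal is correct and follows essentially the same route as the paper: factor at position \(t\), cancel the prefix factor and the normalizers, and rewrite \(\sum_z q_{t,s}(z)^\alpha\) as \(\exp\bigl((1-\alpha)H_\alpha(q_{t,s})\bigr)\). Your remarks on positivity of the prefix probability and on the empty-suffix case \(t=T\) are harmless additions; the latter needs no separate treatment, since \(\mathcal{V}^0\) is a singleton and the general formula already covers it.
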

We have \(1-\alpha<0\), so Eq.~\eqref{eq:pow-temp-ratio-pair}
implies that, among next-token candidates \(a\) with comparable
values of \(\pi(y_t=a\mid x,y_{<t})\), those for which \(q_{t,a}\)
has larger Rényi entropy are relatively downweighted
under \(\pi_{\mathrm{pow},\alpha}\) compared to \(\pi_{\mathrm{temp},\alpha}\).
Thus, compared with per-token temperature scaling, sequence-level power sharpening favors continuations whose suffix distributions under \(\pi\) are more peaked, i.e., have lower Rényi entropy.

\noindent\textbf{Comparison to \citet{karan2026reasoning}.}
\citet{karan2026reasoning} also studied the gap between per-token temperature and sequence-level power sampling, and formalized it in the special case of two extreme tokens (positive vs.\ negative pivotal tokens; their Example~1 and Proposition~3).
Our result enables a quantitative comparison for any two next-token candidates.

\Cref{prop:pow-vs-temp-renyi} suggests that matching the next-token distribution induced by sequence-level power sampling at a step requires information about the suffix distributions following each candidate token.

\subsection{Variance-minimizing one-step proposals for sequential power sampling}
\label{subsec:opt-proposal-power}

Beyond marginal token distributions, we turn to sequential importance sampling (SIS) targeting~\(\pi_\alpha\), where a basic design goal is to stabilize incremental importance weights.
Proposition~3.3 of \citet{zhao2024probabilistic} identifies the unique one-step variance-minimizing proposal in a general SIS setup, and we apply it to the power distribution~\(\pi_\alpha\).

Fix a prompt \(x\in\mathcal{X}\).
Define the unnormalized power mass \(\tilde\pi_\alpha(y_{1:T}):=\pi(y_{1:T}\mid x)^{\alpha}\) and, for \(t=0,\dots,T\), the prefix totals
\begin{equation}
\label{eq:power-prefix-marg}
\tilde\pi_{\alpha,t}(y_{1:t})
\;:=\;
\sum_{y_{t+1:T}\in\mathcal{V}^{T-t}}
\tilde\pi_\alpha(y_{1:T}),
\end{equation}
where for \(t=0\) the prefix is empty.
Let \(Z_\alpha:=\sum_{y_{1:T}\in\mathcal{V}^T}\tilde\pi_\alpha(y_{1:T})\) be the normalizing constant, so that \(\tilde\pi_{\alpha,0}=Z_\alpha\), and let \(\pi_\alpha(\cdot\mid x)\) be the normalized power distribution on \(\mathcal{V}^T\) from Eq.~\eqref{eq:power-distribution}.
For \(t\ge 1\), write \(\pi_\alpha(y_{1:t}\mid x)\) for the \emph{prefix marginal} obtained by summing \(\pi_\alpha(y_{1:T}\mid x)\) over \(y_{t+1:T}\); then \(\pi_\alpha(y_{1:t}\mid x)=\tilde\pi_{\alpha,t}(y_{1:t})/Z_\alpha\).

Consider extending a fixed prefix \(y_{<t}\) by one token \(Y_t\sim q(\cdot\mid x,y_{<t})\) in one step of SIS (or SMC without resampling), while keeping the global target \(\pi_\alpha(\cdot\mid x)\) on \(\mathcal{V}^T\).
Define the \emph{incremental importance weight}~\citep{chopin2020introduction}
\begin{equation}
\label{eq:inc-weight-def}
W_t
\;:=\;
\frac{\pi_\alpha(y_{<t},Y_t\mid x)}{\pi_\alpha(y_{<t}\mid x)\,q(Y_t\mid x,y_{<t})},
\end{equation}
where we condition on \(y_{<t}\) with \(\pi_\alpha(y_{<t}\mid x)>0\), and \(\mathrm{Var}[W_t]\) denotes variance under \(Y_t\sim q(\cdot\mid x,y_{<t})\).
The next proposition shows the unique proposal \(q(\cdot\mid x,y_{<t})\) that minimizes \(\mathrm{Var}[W_t]\) at such a prefix.

\begin{proposition}[Variance-minimizing one-step proposal at prefix $y_{<t}$]
\label{prop:opt-proposal-power}
In the setting above, fix \(y_{<t}\) with \(\pi_\alpha(y_{<t}\mid x)>0\).
Among all proposals \(q(\cdot\mid x,y_{<t})\) on \(\mathcal{V}\), the unique minimizer of \(\mathrm{Var}[W_t]\) is
\begin{equation}
\label{eq:opt-q-power}
q_t^\star(y_t\mid x,y_{<t})
\;:=\;
\frac{\tilde\pi_{\alpha,t}(y_{1:t})}{\tilde\pi_{\alpha,t-1}(y_{<t})}
\;=\;
\pi_\alpha\bigl(y_t\mid x,y_{<t}\bigr),
\end{equation}
where \(y_{1:t}=(y_{<t},y_t)\); the right-hand side equals the next-token conditional under \(\pi_\alpha(\cdot\mid x)\).
\end{proposition}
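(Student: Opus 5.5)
The plan is to reduce the problem to a one-dimensional variance minimization over distributions on the finite set \(\mathcal{V}\). First, I would write the target conditional \(p(y_t):=\pi_\alpha(y_t\mid x,y_{<t})=\tilde\pi_{\alpha,t}(y_{1:t})/\tilde\pi_{\alpha,t-1}(y_{<t})\). This quotient is well defined because \(\pi_\alpha(y_{<t}\mid x)>0\) gives \(\tilde\pi_{\alpha,t-1}(y_{<t})>0\). It equals the next-token conditional of \(\pi_\alpha(\cdot\mid x)\), because the prefix marginals satisfy \(\pi_\alpha(y_{1:t}\mid x)=\tilde\pi_{\alpha,t}(y_{1:t})/Z_\alpha\) and \(Z_\alpha\) cancels in the ratio. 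This proves the second equality in Eq.~\eqref{eq:opt-q-power}, and it also shows that \(W_t=p(Y_t)/q(Y_t\mid x,y_{<t})\).

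Next, I would compute the moments of \(W_t\) under \(Y_t\sim q\). Restrict attention to proposals whose support contains \(\operatorname{supp}(p)\). Otherwise \(W_t\) is not a valid importance weight, and I would state this convention explicitly; if the paper's convention is that such \(q\) are excluded, no further argument is needed. Under this restriction, \(\mathbb{E}_q[W_t]=\sum_{s:q(s)>0}p(s)=1\), so \(\mathrm{Var}[W_t]=\sum_{s} p(s)^2/q(s)-1\), with the sum taken over \(s\) with \(q(s)>0\). Variance is therefore always nonnegative, and when \(q=p\) the weight \(W_t\equiv 1\) and the variance is \(0\). So \(q_t^\star\) is a minimizer.

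For uniqueness, I would apply Cauchy--Schwarz:
\[
1=\Bigl(\sum_s p(s)\Bigr)^2=\Bigl(\sum_s \tfrac{p(s)}{\sqrt{q(s)}}\sqrt{q(s)}\Bigr)^2\le \sum_s\tfrac{p(s)^2}{q(s)}\sum_s q(s)\le \sum_s \tfrac{p(s)^2}{q(s)}.
\]
Equality throughout forces two things. First, \(\sum_{s:q(s)>0}q(s)=1\). Second, \(p(s)/\sqrt{q(s)}\propto\sqrt{q(s)}\), i.e.\ \(p\propto q\) on \(\operatorname{supp}(q)\). Because both \(p\) and \(q\) sum to \(1\), this gives \(q=p\). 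Equivalently, zero variance means \(W_t\) is a.s.\ constant and equal to its mean \(1\), so \(q(s)=p(s)\) wherever \(q(s)>0\), and normalization finishes the argument. I could alternatively cite Proposition~3.3 of \citet{zhao2024probabilistic} with target \(\pi_\alpha\), but the direct argument is short and self-contained.

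The main obstacle is the support and convention issue: ensuring that \(W_t\) is well defined and unbiased for every admissible \(q\), and ruling out proposals that put mass outside \(\operatorname{supp}(p)\). Such proposals do not break the argument. The equality case of Cauchy--Schwarz requires \(\sum_{s\in\operatorname{supp}(p)}q(s)=1\), so any such \(q\) is automatically strictly suboptimal. Still, I would handle this case carefully, using \(p(s)=0\Rightarrow W_t=0\) on those tokens.
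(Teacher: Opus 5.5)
Your proposal is correct and is essentially the paper's proof. You identify the target next-token conditional $p$ as the ratio of prefix totals, require $q$ to cover $\operatorname{supp}(p)$ (the paper's assumption $q(v)>0$ whenever $f(v)>0$), reduce to $\mathrm{Var}[W_t]=\sum_v p(v)^2/q(v)-1$, and conclude from the equality case of Cauchy--Schwarz; your handling of mass placed outside $\operatorname{supp}(p)$ is slightly more careful than the paper's, which only asserts uniqueness on $\{v: f(v)>0\}$. One remark: the support restriction is not merely a convention but is needed for uniqueness, since a point mass on any $s$ with $p(s)>0$ makes $W_t\equiv p(s)$ constant and so also has zero variance.
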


\Cref{prop:opt-proposal-power} implies that minimizing the \emph{local} one-step variance of the incremental weight forces the proposal to coincide with the next-token conditional in Eq.~\eqref{eq:opt-q-power}, which itself depends on the prefix totals \(\tilde\pi_{\alpha,t}\) summed over all suffixes.
In particular, proposals that modify only the base next-token conditional cannot in general equal the unique minimizer in Eq.~\eqref{eq:opt-q-power}.
The proof and SIS background are in \Cref{app:sis-background}.

\noindent\textbf{Implication.}
\Cref{prop:pow-vs-temp-renyi,prop:opt-proposal-power} indicate that inexpensive one-step approximations cannot reproduce \(\pi_\alpha\) without sequence-level information, leaving inference-time approximation of \(\pi_\alpha\) structurally expensive.
This aligns with prior work that expends additional inference-time compute~\citep{karan2026reasoning,azizi2026power,ji2026scalable} to approximate the power distribution.

\section{From self-reward RL to power self-distillation}
\label{sec:rl-distillation}

\Cref{sec:sampling-limits} shows that \(\pi_\alpha\) is structurally expensive to approximate by sampling at inference time.
In this section, we take the complementary
view that \(\pi_\alpha\) also connects RL and self-distillation, allowing us to shift the cost to offline
training.
\Cref{sec:power-rl} identifies \(\pi_\alpha\) as
the closed-form optimum of a KL-regularized RL
objective with self-reward.
\Cref{sec:power-distillation} uses this
identification to derive an offline
self-distillation algorithm from that RL objective.
\Cref{sec:true-reward} then analyzes what the
resulting distilled model achieves: a sharpening
guarantee on the self-reward, and a
characterization of when sharpening also improves
a true reward.

\subsection{Power distribution as the optimum of self-reward RL}
\label{sec:power-rl}
Let \(q:\mathcal{X}\to\Delta(\mathcal{Y})\) be a candidate policy, and consider the KL-regularized RL objective with reward \(r\)~\citep{ouyang2022instructgpt,guo2025deepseek}
\begin{equation}
\label{eq:soft-rl-obj}
J_\beta(q;\pi,r)
:=
\mathbb{E}_{x\sim\mu}\Big[
\mathbb{E}_{y\sim q(\cdot\mid x)}\big[r(x,y)\big]
-\beta\,D_{\mathrm{KL}}\!\big(q(\cdot\mid x)\,\|\,\pi(\cdot\mid x)\big)
\Big]
\end{equation}
with \(\beta>0\).
By the standard closed-form solution of KL-regularized RL~\citep{levine2018reinforcement}, the unique maximizer of Eq.~\eqref{eq:soft-rl-obj} is the reward-tilted distribution
\begin{equation}
\label{eq:tilted-policy}
\pi_\beta^\star(y\mid x)
:=
\frac{\pi(y\mid x)\exp\!\big(\beta^{-1}r(x,y)\big)}{Z_r(x)},
\qquad
Z_r(x):=\sum_{y'\in\mathcal{Y}}\pi(y'\mid x)\exp\!\big(\beta^{-1}r(x,y')\big).
\end{equation}
We restate this as \Cref{prop:kl-regularized-tilt} in \Cref{app:proof-kl-regularized-tilt} and include a proof for completeness.
Specializing the reward in Eq.~\eqref{eq:tilted-policy} to the self-reward~\(r_{\mathrm{self}}\) in Eq.~\eqref{eq:rself} yields the power distribution.

\begin{corollary}[Self-reward tilt equals the power distribution]
\label{cor:self-reward-power}
Suppose \(r(x,y)=r_{\mathrm{self}}(x,y;\pi)=\log\pi(y\mid x)\) as in \eqref{eq:rself}.
Then the optimizer \(\pi_\beta^\star\) in Eq.~\eqref{eq:tilted-policy} equals the power distribution \(\pi_\alpha\) in Eq.~\eqref{eq:power-distribution}:
\begin{equation}
\label{eq:power-sampling-optimal}
\pi_\beta^\star(\cdot\mid x)=\pi_\alpha(\cdot\mid x),
\qquad
\alpha:=1+\beta^{-1}>1.
\end{equation}
\end{corollary}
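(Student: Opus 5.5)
The plan is to substitute the self-reward directly into the closed-form tilt of Eq.~\eqref{eq:tilted-policy}, which we take as given from \Cref{prop:kl-regularized-tilt}. With \(r(x,y)=\log\pi(y\mid x)\), the tilting factor becomes
\[
\exp\!\big(\beta^{-1}\log\pi(y\mid x)\big)=\pi(y\mid x)^{\beta^{-1}}.
\]
Hence the unnormalized optimizer is \(\pi(y\mid x)\cdot\pi(y\mid x)^{\beta^{-1}}=\pi(y\mid x)^{1+\beta^{-1}}=\pi(y\mid x)^{\alpha}\) with \(\alpha:=1+\beta^{-1}\). Likewise, the partition function is \(Z_r(x)=\sum_{y'}\pi(y'\mid x)^{\alpha}\). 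Dividing, \(\pi_\beta^\star(y\mid x)\) coincides term by term with the definition of \(\pi_\alpha\) in Eq.~\eqref{eq:power-distribution}. Since \(\beta>0\), we get \(\beta^{-1}>0\) and therefore \(\alpha>1\), as \Cref{def:power-distribution} requires.

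The only genuine subtlety is the support of \(\pi(\cdot\mid x)\). When \(\pi(y\mid x)=0\), the reward \(\log\pi(y\mid x)=-\infty\) and the expression \(\exp(\beta^{-1}r)\) must be read as \(0\). I would handle this in two steps.

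\emph{First}, adopt the convention \(\exp(-\infty)=0\) and \(0^{\alpha}=0\). Both sides then assign zero mass to such \(y\), so the identity holds on all of \(\mathcal{Y}\).

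\emph{Second}, check that the objective in Eq.~\eqref{eq:soft-rl-obj} remains well posed. Any \(q\) putting mass outside the support of \(\pi\) has infinite KL penalty anyway, so the maximization effectively ranges over \(q\ll\pi\). On that set the reward is finite, and \Cref{prop:kl-regularized-tilt} applies verbatim.

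Finally, \(Z_r(x)>0\) because \(\pi(\cdot\mid x)\) is a probability distribution on the finite set \(\mathcal{Y}\), so at least one term is positive. \(Z_r(x)\) is also finite, again by finiteness of \(\mathcal{Y}\). So the normalization is legitimate, and uniqueness of the maximizer transfers from \Cref{prop:kl-regularized-tilt}.

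I expect no real obstacle: the substance is the algebraic identity \(\pi\cdot\pi^{1/\beta}=\pi^{1+1/\beta}\). The only care needed is the zero-probability and \(-\infty\)-reward bookkeeping above.
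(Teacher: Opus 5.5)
Your proposal is correct and matches the paper's argument, which simply substitutes \(r_{\mathrm{self}}\) into Eq.~\eqref{eq:tilted-policy} to get \(\pi(y\mid x)\exp(\beta^{-1}\log\pi(y\mid x))=\pi(y\mid x)^{1+\beta^{-1}}\), with optimality supplied by \Cref{prop:kl-regularized-tilt}. Your handling of completions with \(\pi(y\mid x)=0\) (reading \(\exp(-\infty)=0\), and noting that any \(q\) not absolutely continuous w.r.t.\ \(\pi\) gets objective \(-\infty\)) is left implicit in the paper and is a sound addition.
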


This identification connects power sampling and self-improvement RL: the inference-time target of \citet{karan2026reasoning} coincides with the closed-form optimum of the KL-regularized self-reward objective studied in \citet{huang2025selfimprovement}, namely \(\pi_\alpha\).

\subsection{Deriving power self-distillation}
\label{sec:power-distillation}

We now derive a self-distillation procedure from the RL objective without requiring the deployed model to sample from \(\pi_\alpha\) at inference time.

\noindent\textbf{RL objective as reverse and then forward KL to \(\pi_\alpha\).}
With \(r=r_{\mathrm{self}}\) and \(\alpha=1+\beta^{-1}\), the inner objective in Eq.~\eqref{eq:soft-rl-obj} can be rewritten for each \(x\) as
\begin{equation}
\label{eq:rl-equals-kl}
\mathbb{E}_{y\sim q(\cdot\mid x)}[r_{\mathrm{self}}(x,y)]
-\beta D_{\mathrm{KL}}\!\big(q(\cdot\mid x)\,\|\,\pi(\cdot\mid x)\big)
=
-\beta\,D_{\mathrm{KL}}\!\big(q(\cdot\mid x)\,\|\,\pi_{\alpha}(\cdot\mid x)\big)
\;+\;\beta\log Z_r(x),
\end{equation}
with the same partition function \(Z_r(x)\) as in Eq.~\eqref{eq:tilted-policy}, which does not depend on~\(q\).
Thus, for each prompt~\(x\), maximizing \(J_\beta(q;\pi,r_{\mathrm{self}})\) over unconstrained~\(q\) is equivalent to minimizing the reverse KL divergence \(D_{\mathrm{KL}}\!\big(q(\cdot\mid x)\,\|\,\pi_{\alpha}(\cdot\mid x)\big)\), with unique minimizer \(q(\cdot\mid x)=\pi_\alpha(\cdot\mid x)\).

However, the reverse KL is an expectation under~\(q\), so optimizing it directly would require on-policy samples from the learner.
We therefore convert the objective into an offline distillation surrogate that shares the same target distribution~\(\pi_\alpha\), by minimizing the forward KL from the teacher distribution to the student,
\(
D_{\mathrm{KL}}\!\big(\pi_\alpha(\cdot\mid x)\,\|\,q(\cdot\mid x)\big)
\).
The population minimizer over~\(q\) is still~\(\pi_\alpha\), so this surrogate preserves the same target distribution while enabling offline maximum-likelihood training on teacher samples.
This forward-KL surrogate mirrors the reward-augmented maximum-likelihood method of \citet{norouzi2016reward}, who also exchanged the reverse KL appearing in entropy-regularized RL for a forward KL.

\noindent\textbf{Forward KL yields MLE on teacher samples.}
Expanding the forward KL training objective gives
\begin{align}
\label{eq:forward-kl-to-mle}
\mathbb{E}_{x\sim\mu}\!\left[D_{\mathrm{KL}}\!\big(\pi_\alpha(\cdot\mid x)\,\|\,q(\cdot\mid x)\big)\right]
&=
\mathbb{E}_{x,y\sim \mu, \pi_\alpha}\big[\log \pi_\alpha(y\mid x)\big]
-
\mathbb{E}_{x,y\sim \mu, \pi_\alpha}\big[\log q(y\mid x)\big],
\end{align}
where \(x,y\sim\mu, \pi_\alpha\) abbreviates \(x\sim\mu\) and \(y\sim\pi_\alpha(\cdot\mid x)\).
The first term on the right-hand side of Eq.~\eqref{eq:forward-kl-to-mle} does not depend on~\(q\), so minimizing the population forward KL is equivalent to maximizing the expected log-likelihood \(\mathbb{E}_{x,y\sim\mu,\pi_\alpha}\big[\log q(y\mid x)\big]\).
In practice we form an empirical objective by drawing i.i.d.\ pairs \(D=\{(x_i,y_i)\}_{i=1}^{n}\) with \(x_i\sim\mu\) and \(y_i\sim\pi_\alpha(\cdot\mid x_i)\), and we solve the following maximum likelihood estimate (MLE) problem:
\begin{equation}
\label{eq:power-distill-mle}
\widehat\pi\in\arg\max_{q\in\Pi_\alpha}\sum_{i=1}^{n}\log q(y_i\mid x_i).
\end{equation}
This procedure uses only offline completions sampled from the power distribution~\(\pi_\alpha\) derived from the base policy~\(\pi\), and it does not rely on any external reward labels, so it is an instance of self-distillation.
We refer to it as \emph{power self-distillation} and summarize it in \Cref{alg:offline-power-distill}.
In practice we run teacher inference once, store \(D=\{(x_i,y_i)\}_{i=1}^{n}\), and then train the student with standard supervised fine-tuning on~\(D\).
Separating teacher generation from student training simplifies implementation and enables dataset reuse.

\subsection{Sharpening and true reward under power self-distillation}
\label{sec:true-reward}

In this subsection, we analyze two complementary aspects of power self-distillation: (i) \Cref{prop:teacher-sft} bounds the extent to which \(\widehat\pi\) sharpens the self-reward, in the sense of \citet{huang2025selfimprovement}; (ii) \Cref{prop:Rprime-cov} shows that the local rate at which sharpening changes a true reward is determined by the covariance \(\mathrm{Cov}_{\pi_\alpha}(r^\star, r_{\mathrm{self}})\).

\noindent\textbf{(i) Self-reward sharpening of the distilled model.}
\citet{huang2025selfimprovement} formalize self-improvement via \((\epsilon,\delta)\)-sharpening relative to~\(\pi\) as in Eq.~\eqref{eq:sharpened}.
The next proposition bounds how well the MLE in Eq.~\eqref{eq:power-distill-mle} concentrates on the self-reward maximizer set~\(\bm{y}^\star(x)\) in Eq.~\eqref{eq:ystar}.

\begin{proposition}[Power self-distillation and sharpening]
\label{prop:teacher-sft}
Fix \(\alpha>1\) and \(\rho,\delta\in(0,1)\). Suppose \(\pi_\alpha\in\Pi_\alpha\) and there exists a constant \(M<\infty\), independent of \(\alpha\), such that \(|\Pi_\alpha|\le M\).
Let \(D=\{(x_i,y_i)\}_{i=1}^{n}\) be i.i.d.\ samples with \(x_i\sim\mu, y_i\sim\pi_\alpha(\cdot\mid x_i)\), and let
\(\widehat\pi\in\arg\max_{q\in\Pi_\alpha}\sum_{i=1}^{n}\log q(y_i\mid x_i)\) be an MLE.
Then with probability at least \(1-\rho\) over \(D\),
\begin{equation}
\label{eq:teacher-distillation}
\mathbb{P}_{x\sim\mu}\big[\widehat\pi(\bm{y}^\star(x)\mid x)\le 1-\delta\big]
\ \lesssim\
\frac{\log(M\rho^{-1})}{\delta n}
\;+\;
\mathbb{P}_{x\sim\mu}\big[\pi_\alpha(\bm{y}^\star(x)\mid x)\le 1-\tfrac{\delta}{2}\big].
\end{equation}
In particular, the right-hand side of Eq.~\eqref{eq:teacher-distillation} converges to $0$ as \(n\to\infty\) and \(\alpha\to\infty\).
\end{proposition}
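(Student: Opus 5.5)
The plan is to split the argument into a statistical term and an approximation term. The statistical term comes from the standard MLE guarantee for finite classes. The approximation term compares the teacher \(\pi_\alpha\) with its maximizer mass.

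\textbf{Step 1: MLE guarantee.} I will invoke the standard realizable finite-class MLE bound in squared Hellinger distance (e.g., Zhang 2006; Foster et al.). Since \(\pi_\alpha\in\Pi_\alpha\) and \(|\Pi_\alpha|\le M\), with probability at least \(1-\rho\),
\[
\mathbb{E}_{x\sim\mu}\big[D_{\mathrm{H}}^2(\widehat\pi(\cdot\mid x),\pi_\alpha(\cdot\mid x))\big]\lesssim \frac{\log(M\rho^{-1})}{n}.
\]
This follows by a Chernoff bound on the log-likelihood ratio \(\tfrac12\log(q/\pi_\alpha)\) and a union bound over \(\Pi_\alpha\).

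\textbf{Step 2: From Hellinger to the event.} Put \(S=\bm{y}^\star(x)\). If \(\widehat\pi(S\mid x)\le 1-\delta\) and \(\pi_\alpha(S\mid x)>1-\delta/2\), then the two probabilities of \(S\) differ by at least \(\delta/2\). This alone gives only \(D_{\mathrm{TV}}\ge\delta/2\) and \(D_{\mathrm H}^2\gtrsim\delta^2\), which would yield a \(1/(\delta^2 n)\) rate.

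To get the stated \(1/(\delta n)\) rate, I will use the sharper two-point inequality. For \(p=\widehat\pi(S^c\mid x)\ge\delta\) and \(p'=\pi_\alpha(S^c\mid x)<\delta/2\),
\[
D_{\mathrm H}^2\ \ge\ (\sqrt{p}-\sqrt{p'})^2\ \ge\ (1-1/\sqrt2)^2\,p\ \gtrsim\ \delta,
\]
using data processing for Hellinger under the coarsening \(y\mapsto\mathbb{1}[y\in S]\). Markov's inequality on \(D_{\mathrm H}^2\) then gives
\[
\mathbb{P}_x\big[\widehat\pi(S\mid x)\le 1-\delta,\ \pi_\alpha(S\mid x)>1-\delta/2\big]\lesssim \frac{\log(M\rho^{-1})}{\delta n}.
\]
Adding the probability of the complementary event \(\pi_\alpha(S\mid x)\le1-\delta/2\) yields Eq.~\eqref{eq:teacher-distillation}. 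I expect the main care point to be this step: choosing Hellinger rather than TV so that the rate is \(1/\delta\) rather than \(1/\delta^2\).

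\textbf{Step 3: Limits.} The first term vanishes as \(n\to\infty\) because \(M\) is independent of \(\alpha\). For the second term, fix \(x\). Let \(p^\star=\max_y\pi(y\mid x)\), let \(k=|\bm{y}^\star(x)|\), and let \(p_2<p^\star\) be the second-largest value. Then
\[
1-\pi_\alpha(S\mid x)\le \frac{|\mathcal Y|\,p_2^\alpha}{k\,(p^\star)^\alpha}\to0,
\]
since \(\mathcal{Y}\) is finite. Hence \(\mathbb{1}[\pi_\alpha(S\mid x)\le1-\delta/2]\to0\) pointwise in \(x\), and dominated convergence under \(\mu\) sends the second term to \(0\) as \(\alpha\to\infty\).
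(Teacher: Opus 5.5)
Your proposal is correct and is essentially the paper's proof: the same split of the failure event according to whether $\pi_\alpha(\bm{y}^\star(x)\mid x)\le 1-\delta/2$, and the same lower bound $D_H^2\ge(1-1/\sqrt{2})^2\delta$ on the bad event. You obtain that bound by data processing under the coarsening $y\mapsto\mathbf{1}\{y\in\bm{y}^\star(x)\}$, while the paper restricts the Hellinger sum to $B(x)=\mathcal{Y}\setminus\bm{y}^\star(x)$ and applies the reverse triangle inequality; these are the same inequality, and both proofs then use Markov's inequality, the finite-class MLE Hellinger bound, and pointwise convergence with dominated convergence for $\alpha\to\infty$ (your explicit $(|\mathcal{Y}|/k)(p_2/p^\star)^\alpha$ rate is a slightly more quantitative form of the paper's $o(m(x)^\alpha)$ step).
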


Thus, for sufficiently large \(n\) and \(\alpha\), power self-distillation can achieve \((\epsilon,\delta)\)-sharpening in the sense of Eq.~\eqref{eq:sharpened}.
The proof is in \Cref{app:proof-teacher-sft}.

\noindent\textbf{(ii) When does sharpening also improve a different true reward?}
\Cref{prop:teacher-sft} guarantees concentration on the self-reward maximizer set, but evaluation is typically governed by a different true reward (e.g., correctness).
Let \(r^\star:\mathcal{X}\times \mathcal{Y}\to \mathbb{R}\) denote this true reward and define, for fixed \(x\in\mathcal{X}\),
\[
R(\alpha; x):=\mathbb{E}_{y\sim\pi_\alpha(\cdot\mid x)}\big[r^\star(x,y)\big].
\]
The next proposition characterizes how \(R(\alpha;x)\) changes with \(\alpha\).

\begin{proposition}[Covariance form of \(\partial_\alpha R(\alpha;x)\)]
\label{prop:Rprime-cov}
For any \(\alpha>0\) and any fixed \(x\in\mathcal{X}\),
\begin{equation}
\label{eq:Rprime-cov}
\frac{\partial}{\partial \alpha}R(\alpha;x)
=
\mathrm{Cov}_{y\sim\pi_\alpha(\cdot\mid x)}\!\big(r^\star(x,y),r_{\mathrm{self}}(x,y)\big),
\end{equation}
where covariances are over the support of \(\pi(\cdot\mid x)\), on which \(\log \pi(\cdot \mid x)\) is finite.
In particular, if for some \(b(x)\in\mathbb{R}\) and \(c(x)>0\),
\begin{equation}
\label{eq:rstar-affine-logpi}
r^\star(x,y)=c(x)\,r_{\mathrm{self}}(x,y)+b(x)\qquad\forall y\in\mathcal{Y},
\end{equation}
then \(R(\alpha;x)\) is non-decreasing in \(\alpha\):
\[
\frac{\partial}{\partial \alpha}R(\alpha;x)
=
c(x)\,\mathrm{Var}_{y\sim\pi_\alpha(\cdot\mid x)}\!\big(r_{\mathrm{self}}(x,y)\big)\ge 0.
\]
\end{proposition}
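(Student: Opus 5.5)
The plan is to treat \(\pi_\alpha(\cdot\mid x)\) as an exponential family in the natural parameter \(\alpha\), with sufficient statistic \(r_{\mathrm{self}}(x,y)=\log\pi(y\mid x)\), and then use the standard fact that differentiating an expectation under an exponential family produces a covariance with the sufficient statistic.

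Fix \(x\) and let \(S:=\{y:\pi(y\mid x)>0\}\), a finite nonempty set because \(\mathcal{Y}=\mathcal{V}^T\) is finite. For every \(\alpha>0\), \(\pi_\alpha(\cdot\mid x)\) vanishes off \(S\), so all sums can be restricted to \(S\). On \(S\) we write
\[
\pi_\alpha(y\mid x)=\exp\bigl(\alpha\,r_{\mathrm{self}}(x,y)-A(\alpha)\bigr),\qquad A(\alpha):=\log\sum_{y'\in S}\exp\bigl(\alpha\,r_{\mathrm{self}}(x,y')\bigr).
\]
Here \(r_{\mathrm{self}}\) is finite on \(S\), so each summand is a smooth function of \(\alpha\). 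The sum is finite, so \(A\) is smooth and we may differentiate term by term without any dominated-convergence argument.

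The key steps are as follows.

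\textbf{Step 1.} Compute the derivative of the log-partition function:
\[
A'(\alpha)=\sum_{y\in S}\pi_\alpha(y\mid x)\,r_{\mathrm{self}}(x,y)=\mathbb{E}_{\pi_\alpha}[r_{\mathrm{self}}].
\]

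\textbf{Step 2.} Differentiate the density:
\[
\partial_\alpha\pi_\alpha(y\mid x)=\pi_\alpha(y\mid x)\bigl(r_{\mathrm{self}}(x,y)-\mathbb{E}_{\pi_\alpha}[r_{\mathrm{self}}]\bigr).
\]

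\textbf{Step 3.} Write \(R(\alpha;x)=\sum_{y\in S}\pi_\alpha(y\mid x)\,r^\star(x,y)\) and differentiate term by term. This gives
\[
\partial_\alpha R(\alpha;x)=\mathbb{E}_{\pi_\alpha}\bigl[r^\star\,(r_{\mathrm{self}}-\mathbb{E}_{\pi_\alpha}[r_{\mathrm{self}}])\bigr],
\]
which is exactly \(\mathrm{Cov}_{\pi_\alpha}(r^\star,r_{\mathrm{self}})\), i.e.\ Eq.~\eqref{eq:Rprime-cov}.

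\textbf{Step 4.} For the affine case Eq.~\eqref{eq:rstar-affine-logpi}, use bilinearity of covariance and the fact that a constant has zero covariance with anything. Then \(\mathrm{Cov}(c\,r_{\mathrm{self}}+b,\,r_{\mathrm{self}})=c\,\mathrm{Var}_{\pi_\alpha}(r_{\mathrm{self}})\), which is \(\ge 0\) because \(c(x)>0\). A nonnegative derivative gives monotonicity of \(R(\alpha;x)\) in \(\alpha\).

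There is no real obstacle here. The only point requiring care is bookkeeping about the support: \(\log\pi\) is \(-\infty\) off \(S\), so the covariance must be interpreted over \(S\), as the statement specifies. We should also note that \(\alpha>0\) guarantees \(\pi(y\mid x)^\alpha=0\) for \(y\notin S\), so points outside \(S\) contribute nothing to either \(R\) or its derivative.
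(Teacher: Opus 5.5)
Your proposal is correct and follows essentially the same route as the paper. Both write $\pi_\alpha$ as $e^{\alpha\log\pi}/Z_\alpha$, compute $\partial_\alpha\log Z_\alpha=\mathbb{E}_{\pi_\alpha}[\log\pi]$, derive $\partial_\alpha\pi_\alpha=\pi_\alpha(\log\pi-\mathbb{E}_{\pi_\alpha}[\log\pi])$, and differentiate $R$ term by term to obtain the covariance. You additionally make explicit two things the paper's proof leaves implicit: the restriction of all sums to the support (justified because $0^\alpha=0$ for $\alpha>0$), and the affine-case corollary via bilinearity.
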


The proof is in \Cref{app:proof-Rprime-cov}.
\Cref{prop:Rprime-cov} states that \(\partial_\alpha R(\alpha;x)\) equals the covariance between \(r^\star\) and \(r_{\mathrm{self}}\) under \(\pi_\alpha(\cdot\mid x)\), so whether increasing \(\alpha\) improves the true reward is determined exactly by \(\mathrm{Cov}_{y\sim\pi_\alpha(\cdot\mid x)}\!\big(r^\star(x,y),r_{\mathrm{self}}(x,y)\big)\).
In particular, when \(r^\star=r_{\mathrm{self}}\), this covariance reduces to \(\mathrm{Var}_{y\sim\pi_\alpha(\cdot\mid x)}\!\big(\log\pi(y\mid x)\big)\ge 0\), so \(R(\alpha;x)\) is non-decreasing in \(\alpha\).

\begin{figure}[tbp]
  \begin{minipage}[b]{0.48\linewidth}
\centering\includegraphics[width=\linewidth]{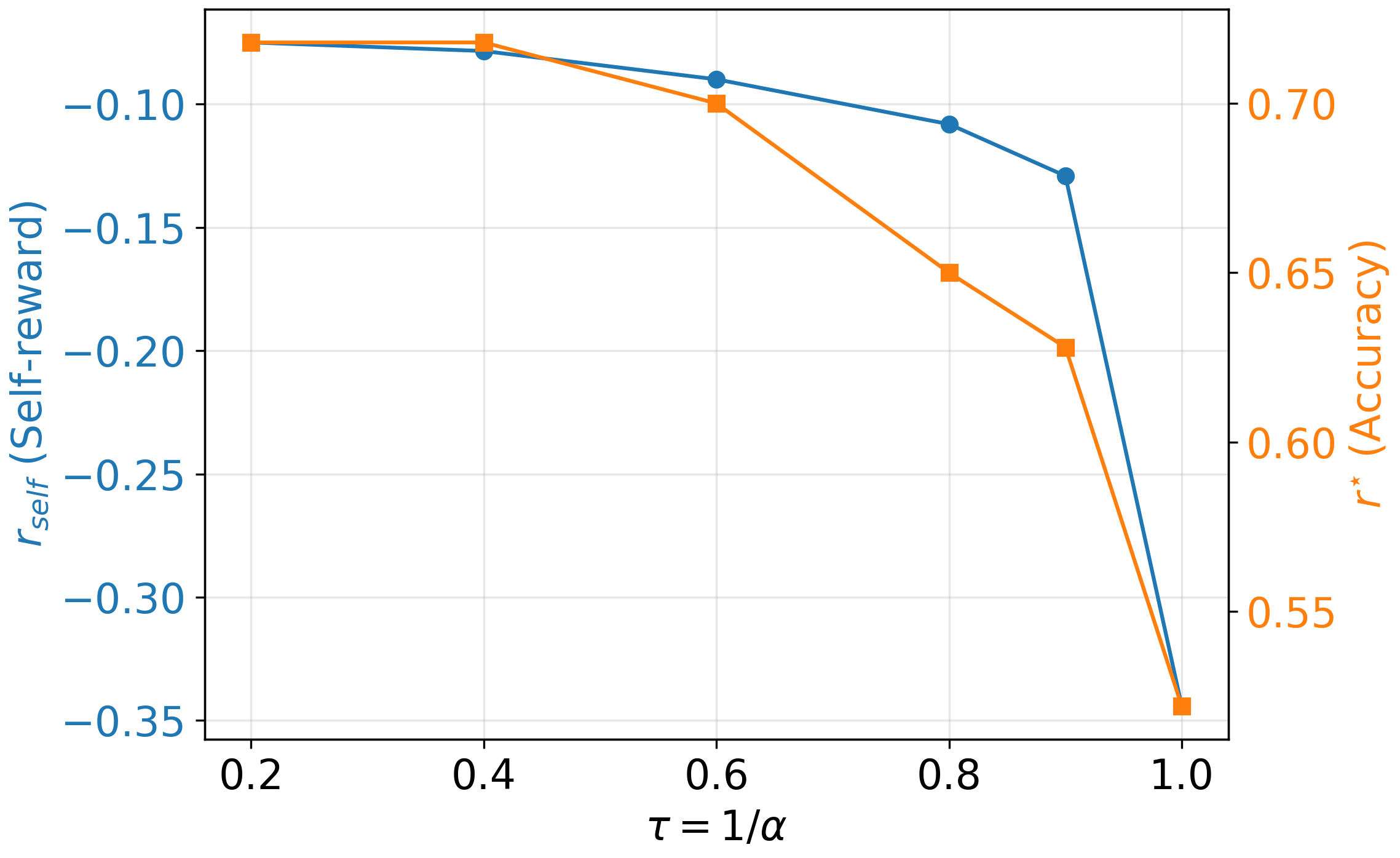}
    \caption{\textbf{Sharper power sampling raises both \(r_{\mathrm{self}}\) and \(r^\star\).} A smaller temperature parameter \(\tau=1/\alpha\) (sharper distribution) increases \(r_{\mathrm{self}}\) and true reward \(r^\star\) (accuracy) on MATH500.}
    \label{fig:tau_r}
  \end{minipage}
  \hfill
\begin{minipage}[b]{0.48\linewidth}
    \centering
    \includegraphics[width=\linewidth]{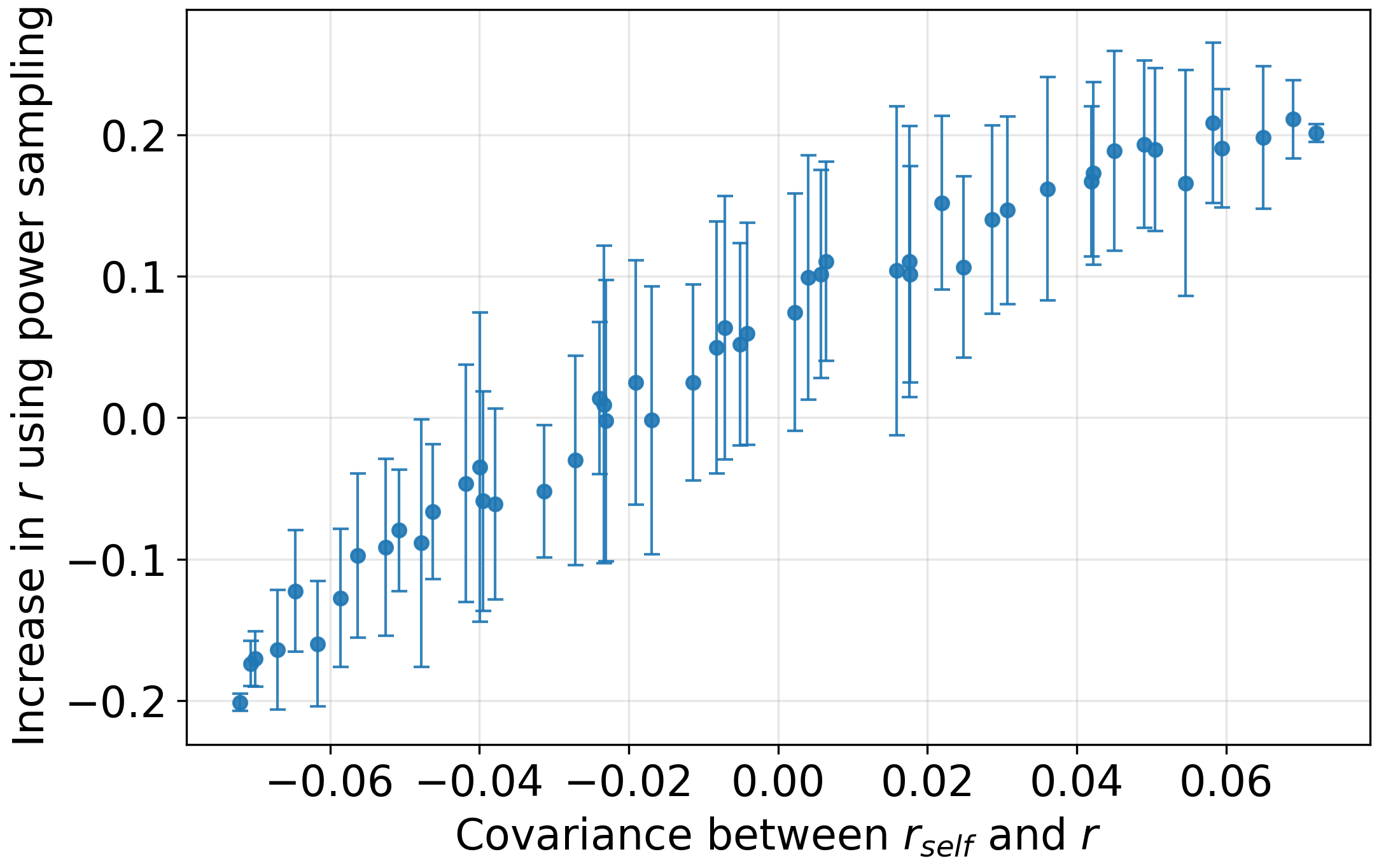}
    \caption{\textbf{Empirical illustration of \Cref{prop:Rprime-cov}.} Synthetic rewards \(r\) with varying correlation to \(r_{\mathrm{self}}\) on MATH500, with error bars denoting standard deviation.}
    \label{fig:reward_corr}
  \end{minipage}
\end{figure}

\section{Numerical evaluation}
\label{sec:numerical}
This section experimentally validates the following points.

\begin{itemize}[leftmargin=1.0em,topsep=0pt,itemsep=1pt,parsep=0pt,partopsep=0pt]
   \item (RQ1) Power sampling increases self-reward (\Cref{sec:power-rl}).
   \item (RQ2) Sharpening can improve true reward when \(r^\star\) aligns with \(r_{\mathrm{self}}\) (\Cref{sec:true-reward}).
   \item (RQ3) Power self-distillation achieves self-improvement (\Cref{sec:rl-distillation}).
\end{itemize}
Detailed experimental setups are provided in \Cref{sec_app:experimental-setups}, and synthetic experiments validating \Cref{sec:sampling-limits} are shown in \Cref{sec_app:synthetic-renyi-validation,sec_app:synthetic-sis-validation}.

\subsection{Setup}
We used the Qwen2.5-Math-7B~\citep{yang2024qwen2}, Qwen2.5-7B~\citep{yang2024qwen}, and Phi-3.5-mini-instruct~\citep{abdin2024phi} models on the MATH~\citep{lightman2024lets}, HumanEval~\citep{chen2021evaluating}, MBPP~\citep{austin2021program}, and GPQA~\citep{rein2024gpqa} datasets.
In the main text, we focus on the Qwen2.5-Math-7B model on the MATH dataset, which consists of 12,500 competition-style math problems spanning seven categories.
For evaluation, we used MATH500, a selected subset of the MATH test set.
For power self-distillation (\Cref{alg:offline-power-distill}), we sampled 500 training problems from MATH, excluding those in MATH500.
We fine-tuned with LoRA adapters~\citep{hu2022lora} using the AdamW optimizer~\citep{loshchilov2017decoupled}.

For power sampling, we used the MH procedure of \citet{karan2026reasoning} (\Cref{alg:samp}) with their default hyperparameters, including \(\alpha=4.0\).
For additional baselines, we used standard autoregressive sampling (Standard) and token-wise temperature scaling (Temperature) with \(\tau=1/\alpha=0.25\), so that the token-level baseline uses the same local power exponent as power sampling.
We studied three model variants: the base model (Base), the power-distilled model (Power-distilled, \Cref{alg:offline-power-distill}), and a randomly initialized model (RandW).
RandW is a negative control for cases where likelihood is not aligned with correctness.

To study the relationship between true reward \(r^\star\) and self-reward \(r_{\mathrm{self}}\), we additionally evaluated an approach we call self-reward Best-of-\(N\): given \(N\) sampled completions \(\{y_i\}_{i=1}^N\), we selected the completion with the largest value of \(r_{\mathrm{self}}(y_i)\).
In all experiments, \(r_{\mathrm{self}}\) denotes the completion-token average log-likelihood under the evaluated model, with prompt tokens masked out; this length normalization makes values comparable across completions.

\begin{table}[t]
\centering
\caption{True reward~$r^{\star}$ (accuracy) and self-reward~$r_{\mathrm{self}}$ for Qwen2.5-Math-7B on MATH500. The left two columns report means over all sampled completions. The right two columns report self-reward Best-of-\(N\): the completion with the largest \(r_{\mathrm{self}}\) among samples generated with different seeds. Evaluated with four seeds.}
\label{tab:logp-seed-aggregates}
\begin{tabular}{llcccc}
  \toprule
  & & \multicolumn{2}{c}{All completions} & \multicolumn{2}{c}{Self-reward Best-of-\(N\)} \\
  \cmidrule(lr){3-4} \cmidrule(lr){5-6}
  Model & Sampling & $r^{\star} (\uparrow)$ & $r_{\mathrm{self}}$ & $r^{\star} (\uparrow)$ & $r_{\mathrm{self}}$ \\
  \midrule
  \multirow{2}{*}{RandW} & Standard & $0.000 \pm 0.000$ & $-13.406 \pm 0.008$ & $0.000$ & $-13.309$ \\
   & Power & $0.000 \pm 0.000$ & $-12.370 \pm 0.014$ & $0.000$ & $-12.133$ \\
  \midrule
  \multirow{3}{*}{Base} & Standard & $0.508 \pm 0.016$ & $-0.316 \pm 0.043$ & $0.680$ & $-0.097$ \\
   & Temperature & $0.683 \pm 0.014$ & $-0.061 \pm 0.015$ & $0.756$ & $-0.036$ \\
   & Power & $0.714 \pm 0.006$ & $-0.077 \pm 0.001$ & $0.742$ & $-0.062$ \\
  \midrule
  \multirow{2}{*}{Power-distilled} & Standard & $0.643 \pm 0.025$ & $-0.089 \pm 0.005$ & $0.763$ & $-0.042$ \\
   & Temperature & $\mathbf{0.722} \pm 0.009$ & $\mathbf{-0.043} \pm 0.001$ & $\mathbf{0.768}$ & $\mathbf{-0.034}$ \\
  \bottomrule
  \end{tabular}
  \end{table}

\begin{table}[tbp]
  \centering
  \small
  \setlength{\tabcolsep}{4pt}
  \renewcommand{\arraystretch}{1.12}
  \caption{Qualitative comparison for Qwen2.5-Math-7B on a MATH500 question: \textit{``The
  coordinates of a parallelogram are $(5,3)$, $(6,8)$, $(7,4)$,
  and $(x,y)$ with $x>7$. What is the value of $x+y$?''}}
  \label{tab:qualitative_math}
  \begin{tabular}{llcp{0.60\linewidth}}
    \toprule
    Model & Sampling & Correctness & Summary \\
    \midrule
    \multirow{2}{*}{Base}& Temperature & No & Uses irrelevant mathematical properties and generates an incorrect formula, resulting in a hallucinated final answer. \\
    & Power & Yes &
    Maintains logical consistency and mathematical accuracy, but simulates a Python execution to present a non-executed solution. \\
    Distilled & Standard & Yes &
    Shows robust reasoning and self-correction by re-evaluating the problem when constraints are not met. \\
    \bottomrule
  \end{tabular}
\end{table}

\subsection{Results}

\noindent\textbf{Power sampling increases self-reward (RQ1).}
\Cref{tab:logp-seed-aggregates} shows mean self-reward (\(r_{\mathrm{self}}\)) and accuracy (\(r^{\star}\)) over sampled completions (left two columns).
Power sampling raises \(r_{\mathrm{self}}\) for both the base model and RandW.
Decoding with token-wise temperature also raises \(r_{\mathrm{self}}\) on the base model.

\noindent\textbf{Sharpening can improve true reward when \(r^\star\) aligns with \(r_{\mathrm{self}}\) (RQ2).}
\Cref{tab:logp-seed-aggregates} also shows that higher \(r_{\mathrm{self}}\) is typically accompanied by higher true reward \(r^\star\), except on RandW, where \(r_{\mathrm{self}}\) is not aligned with \(r^\star\).
Notably, self-reward Best-of-\(N\) yields the largest gains in \(r^\star\) across all models.

To make this point clearer, \Cref{fig:tau_r} plots the decoding temperature \(\tau=1/\alpha\) against \(r_{\mathrm{self}}\) and \(r^\star\); both quantities decrease as \(\tau\) increases (i.e., sharpening weakens).
\Cref{fig:reward_corr} uses synthetic rewards (\Cref{sec_app:experimental-setups} for details), whose correlation with \(r_{\mathrm{self}}\) ranges from positive to negative; the gain in \(r\) from power sampling grows roughly linearly with \(\mathrm{Cov}(r,r_{\mathrm{self}})\).

\noindent\textbf{Power self-distillation achieves self-improvement (RQ3).}
\Cref{tab:logp-seed-aggregates} shows that after power self-distillation, the student with temperature decoding scores higher on both \(r^\star\) and \(r_{\mathrm{self}}\) than the base model under standard sampling, temperature sampling, or power sampling.
The strongest result is obtained by combining power self-distillation with Temperature decoding.
At inference time, the student uses only autoregressive decoding (with temperature), thereby amortizing the inference cost of power sampling into offline training.

\noindent\textbf{Qualitative example.}
\Cref{tab:qualitative_math} summarizes completions on one MATH500 problem.
With token-wise temperature, the model cites irrelevant facts and concludes with a hallucinated formula, plausibly because token-wise tilting in Eq.~\eqref{eq:temp-marginal-token} does not coincide with sequence-level tilting in Eq.~\eqref{eq:pow-marginal-token}.
Power sampling instead tilts toward \(\pi_\alpha\) and is graded correct, but the completion includes plausible Python code that is never executed, and the model only mimics a reasoning pattern.
After power self-distillation, standard decoding yields the correct answer with more robust step-by-step reasoning.
The full completions are shown in~\Cref{sec_app:qualitative}.

Additional dataset--model combinations are reported in~\Cref{sec_app:other}; in each case, the distilled model outperforms the corresponding base model.

\section{Conclusion}
\label{sec:conclusion}
We showed that the power distribution bridges power sampling, self-reward KL-regularized RL, and self-distillation as the sampling target, closed-form RL optimum, and teacher distribution.
From the sampling perspective, inexpensive local approximations are structurally limited: per-token temperature scaling and variance-minimizing one-step proposals both miss sequence-level information.
From the RL perspective, the same sequence-level power distribution is the optimizer of KL-regularized RL when the reward is the model's sequence-level log-probabilities.
This identification yields power self-distillation, an offline surrogate that amortizes power sampling into supervised training on teacher samples.
Power self-distillation can achieve self-reward sharpening, while true-reward improvement is governed by \(\mathrm{Cov}_{\pi_\alpha}(r^\star,r_{\mathrm{self}})\).
Finally, we supported the analysis with experiments.

\noindent\textbf{Limitations.}
Self-improvement through sharpening and distillation inherits the capabilities of the base model, so gains can be small when the base is weak; improving base-model quality (e.g., pretraining) is outside our scope.
Our analysis and experiments focus on autoregressive language models over finite horizons.

\bibliographystyle{plainnat}
\bibliography{ref}

%

\clearpage

\appendix

\renewcommand{\thetable}{S.\arabic{table}}
\renewcommand{\thefigure}{S.\arabic{figure}}
\setcounter{table}{0}
\setcounter{figure}{0}
\renewcommand{\theHtable}{S.\arabic{table}}
\renewcommand{\theHfigure}{S.\arabic{figure}}

\begin{algorithm}[htbp]
\caption{Power self-distillation}
\label{alg:offline-power-distill}
\begin{algorithmic}[1]
\Require Base model \(\pi\); power exponent \(\alpha>0\); teacher sampler \(\textsc{TeacherSample}(x;\pi,\alpha)\) approximating \(\pi_\alpha(\cdot\mid x)\) (e.g., \Cref{alg:samp}); prompt source \(\mu\); dataset size \(n\); student model \(q_\theta\) (initialized from \(\pi\)).
\Ensure Trained student \(q_\theta\).
\State \textbf{Offline teacher sampling (data collection):}
\For{\(i=1,\dots,n\)}
    \State Sample \(x_i \sim \mu\).
    \State Sample \(y_i \sim \textsc{TeacherSample}(x_i;\pi,\alpha)\).
\EndFor
\State Store \(D \gets \{(x_i,y_i)\}_{i=1}^n\).
\State \textbf{Student training (supervised fine-tuning):}
\State Optimize completion-only NLL on \(D\):
\[
\theta \leftarrow \arg\min_{\theta}\sum_{(x,y)\in D} -\log q_\theta(y\mid x),
\]
\State where the loss is computed only on completion tokens by masking prompt tokens.
\end{algorithmic}
\end{algorithm}

\begin{algorithm}[htbp]
\caption{Power sampling using Metropolis--Hastings~\citep{karan2026reasoning} {\color{algvariant}(Power($\infty$): deterministic acceptance)}.}
\label{alg:samp}
\paragraph{Notation.}
Let the unnormalized power target \(\tilde \pi_{\alpha}(y\mid x)\propto \pi(y\mid x)^{\alpha}\).
Let \(A(y',y)\) denote the Metropolis--Hastings acceptance ratio comparing completions \(y,y'\) (with \(x\) fixed), where \(p_{\mathrm{prop}}(y'\mid y,x)\) denotes the autoregressive proposal density for resampling a suffix under \(p_{\mathrm{prop}}(\cdot\mid x,\cdot)\):
\begin{equation}
\label{eq:accept}
A(y',y)
\;:=\;
\min\left\{1,\
\frac{\tilde \pi_{\alpha}(y'\mid x)}{\tilde \pi_{\alpha}(y\mid x)}
\cdot
\frac{p_{\mathrm{prop}}(y\mid y',x)}{p_{\mathrm{prop}}(y'\mid y,x)}
\right\}.
\end{equation}
\begin{algorithmic}[1]
\Require Base model \(\pi\); proposal \(p_{\mathrm{prop}}\); prompt \(x\); completion length \(T\) with \(B\mid T\); block size \(B\); inner iterations \(N_{\mathrm{MCMC}}\); exponent \(\alpha>0\).
\Ensure Completion \(y_{1:T}\) (\(\mathrm{MH}\): approximate sample from powered conditional \(\pi(\cdot\mid x)^{\alpha}\) up to MCMC error; Power$(\infty)$: accept proposals only if \(\pi(y'\mid x)>\pi(y\mid x)\), so \(\pi(y\mid x)\) is monotone along accepted moves).
\For{\(k=0,1,\dots,T/B-1\)}
    \State Given the current state \(y_{1:kB}\), construct an initialization \(y^{(0)}\) by extending autoregressively with \(p_{\mathrm{prop}}\) to length \((k+1)B\):
    \[
    y^{(0)}_t \sim p_{\mathrm{prop}}(y_t \mid x, y_{<t}),
    \qquad kB+1 \le t \le (k+1)B.
    \]
    \State Set \(y\gets y^{(0)}\).
    \For{\(n=1,\dots,N_{\mathrm{MCMC}}\)}
        \State Sample \(m\) uniformly from \(\{1,\dots,(k+1)B\}\).
        \State Construct a proposal completion \(y'\) with prefix \(y_{1:m-1}\) and resample the suffix:
        \[
        y'_{t} \sim p_{\mathrm{prop}}(y_t \mid x, y'_{<t}),
        \qquad m \le t \le (k+1)B.
        \]
        \State \textbf{(MH)} Compute \(A(y',y)\) from Eq.~\eqref{eq:accept}. Draw \(u\sim\mathrm{Uniform}(0,1)\). If \(u\le A(y',y)\), set \(y\gets y'\).
        \State \textcolor{algvariant}{\textbf{(Power$(\infty)$)} If \(\pi(y'\mid x) > \pi(y\mid x)\), set \(y\gets y'\).}
    \EndFor
    \State Set \(y_{1:(k+1)B}\gets y\) as the current state carried into the next block iteration.
\EndFor \\
\Return \(y_{1:T}\)
\end{algorithmic}
\end{algorithm}

\begin{table*}[t]
  \centering
  \setlength{\tabcolsep}{2.6pt}
  \caption{Comparison of prior work by its connection to power distributions, sampling, RL, distillation, and whether it avoids external rewards.}
  \label{tab:comparison}
  \begin{tabular*}{\linewidth}{@{\extracolsep{\fill}}>{\raggedright\arraybackslash}m{0.27\linewidth}ccccc@{}}
  \toprule
  Paper & Power distribution & Sampling & RL & Distillation & No external reward \\
  \midrule
  \citet{norouzi2016reward} & -- & -- & ✓ & ✓ & -- \\ 
  \citet{rusu2016policy} & -- & -- & ✓ & ✓ & -- \\ 
  \citet{teh2017distral} & -- & -- & ✓ & ✓ & -- \\ 
  \citet{laskin2023incontext} & -- & -- & ✓ & ✓ & -- \\ 
  \citet{huang2025selfimprovement} & -- & ✓ & ✓ & ✓ & ✓ \\ 
  \citet{gui2024bonbon} & -- & ✓ & -- & ✓ & -- \\ 
  \citet{amini2025variational} & -- & ✓ & -- & ✓ & -- \\ 
  \citet{balashankar2025infalign} & -- & ✓ & ✓ & -- & -- \\ 
  \citet{sessa2025bond} & -- & ✓ & ✓ & ✓ & -- \\ 
  \citet{yang2025fasterwind} & -- & ✓ & ✓ & ✓ & -- \\ 
  \citet{karan2026reasoning} & ✓ & ✓ & -- & -- & ✓ \\ 
  \citet{azizi2026power} & ✓ & ✓ & -- & -- & ✓ \\ 
  \citet{ji2026scalable} & ✓ & ✓ & -- & -- & ✓ \\ 
  \textbf{Ours} & ✓ & ✓ & ✓ & ✓ & ✓ \\
  \bottomrule
  \end{tabular*}
\end{table*}

\clearpage

\section{Proofs and background}

\subsection[Proof of the power--temperature odds-ratio proposition]{Proof of \Cref{prop:pow-vs-temp-renyi}}
\label{app:proof-pow-vs-temp-renyi}

\begin{proof}[Proof of \Cref{prop:pow-vs-temp-renyi}]
Fix \(x\) and a prefix \(y_{<t}\) with \(\pi(y_{<t}\mid x)>0\), and write \(p(s):=\pi(y_{t}=s\mid x,y_{<t})\) for \(s\in\mathcal{V}\).
For any suffix \(y_{t+1:T}\) and token \(s\), autoregressive factorization gives
\[
\pi(y_{<t},\,s,\,y_{t+1:T}\mid x)
=
\pi(y_{<t}\mid x)\,p(s)\,q_{t,s}(y_{t+1:T}).
\]
Using Eq.~\eqref{eq:pow-marginal-token}, the numerator for token \(s\) is therefore
\begin{align*}
\sum_{y_{t+1:T}\in\mathcal{V}^{T-t}}\pi(y_{<t},\,s,\,y_{t+1:T}\mid x)^{\alpha}
&=
\pi(y_{<t}\mid x)^{\alpha}\,p(s)^{\alpha}\sum_{y_{t+1:T}}q_{t,s}(y_{t+1:T})^{\alpha}.
\end{align*}
Summing over \(s\in\mathcal{V}\) yields the corresponding denominator in Eq.~\eqref{eq:pow-marginal-token}, so the prefix factor \(\pi(y_{<t}\mid x)^{\alpha}\) cancels and
\begin{equation}
\label{eq:pow-marg-simplify}
\pi_{\mathrm{pow},\alpha}(y_{t}=s \mid x, y_{<t})
=
\frac{p(s)^{\alpha}\sum_{z}q_{t,s}(z)^{\alpha}}{\sum_{s'\in\mathcal{V}}p(s')^{\alpha}\sum_{z}q_{t,s'}(z)^{\alpha}}.
\end{equation}
For temperature scaling, Eq.~\eqref{eq:temp-marginal-token} gives \(\pi_{\mathrm{temp},\alpha}(y_{t}=s\mid x,y_{<t})=p(s)^{\alpha}/\sum_{s'}p(s')^{\alpha}\).
Hence for any \(a,b\in\mathcal{V}\) with \(p(b)>0\),
\[
\frac{\pi_{\mathrm{pow},\alpha}(y_{t}=a \mid x, y_{<t})}{\pi_{\mathrm{pow},\alpha}(y_{t}=b \mid x, y_{<t})}
=
\frac{p(a)^{\alpha}\sum_{z}q_{t,a}(z)^{\alpha}}{p(b)^{\alpha}\sum_{z}q_{t,b}(z)^{\alpha}},
\qquad
\frac{\pi_{\mathrm{temp},\alpha}(y_{t}=a \mid x, y_{<t})}{\pi_{\mathrm{temp},\alpha}(y_{t}=b \mid x, y_{<t})}
=
\left(\frac{p(a)}{p(b)}\right)^{\alpha},
\]
and therefore
\[
\frac{\pi_{\mathrm{pow},\alpha}(y_{t}=a \mid x, y_{<t})}{\pi_{\mathrm{pow},\alpha}(y_{t}=b \mid x, y_{<t})}
\bigg/
\frac{\pi_{\mathrm{temp},\alpha}(y_{t}=a \mid x, y_{<t})}{\pi_{\mathrm{temp},\alpha}(y_{t}=b \mid x, y_{<t})}
=
\frac{\sum_{z}q_{t,a}(z)^{\alpha}}{\sum_{z}q_{t,b}(z)^{\alpha}}.
\]
By the definition of \(H_{\alpha}\), \(\sum_{z}q(z)^{\alpha}=\exp\!\bigl((1-\alpha)H_{\alpha}(q)\bigr)\), which yields Eq.~\eqref{eq:pow-temp-ratio-pair}.
\end{proof}

\subsection[Background and proof of the variance-minimizing proposal]{Background and proof of \Cref{prop:opt-proposal-power}}
\label{app:sis-background}

This appendix is aligned with the sequential Monte Carlo presentation of \citet{zhao2024probabilistic}, who derive a general \emph{twist-induced} proposal (their Prop.~3.3) that minimizes the variance of the one-step incremental importance weight for a given tower of intermediate targets.
We provide a proof of the same variance-minimization fact specialized to \(\pi_\alpha\) using the Cauchy--Schwarz inequality (cf.\ \citealt{zhao2024probabilistic}, App.~A.2).

\subsubsection{From a sequence-level target to a sequential sampler}
\label{app:sis-seq-target}
Let \(\mathcal{V}\) be a finite vocabulary and fix a prompt \(x\) and completion length \(T\ge 1\).
Let \(P(y_{1:T}):=\pi_\alpha(y_{1:T}\mid x)\) denote the power distribution on \(\mathcal{V}^T\) from Eq.~\eqref{eq:power-distribution}, i.e.,
\[
P(y_{1:T})\propto \pi(y_{1:T}\mid x)^{\alpha},
\qquad
\pi(y_{1:T}\mid x)=\prod_{t=1}^{T}\pi(y_t\mid x,y_{<t}).
\]
Exact sampling from \(P\) may be intractable because normalizing constants involve sums over exponentially many sequences.
Many practical samplers therefore build \(y_{1:T}\) \emph{sequentially}: having generated a prefix \(y_{<t}\), they draw a next token \(y_t\) from a proposal \(q(\cdot\mid x,y_{<t})\) and update importance weights so that, after \(T\) steps, full-length draws can be reweighted to be (exactly or approximately) correct for \(P\).

\subsubsection{Incremental importance weights}
\label{app:sis-inc-weights}
For \(t=1,\dots,T\), let \(P_t\) denote the marginal of \(P\) on the length-\(t\) prefix:
\[
P_t(y_{1:t}):=\sum_{y_{t+1:T}\in\mathcal{V}^{T-t}} P(y_{1:T}).
\]
One step of sequential importance sampling extends \(y_{<t}\) by sampling \(Y_t\sim q(\cdot\mid x,y_{<t})\).
The \emph{incremental} multiplicative factor appended to the running weight is~\citep{chopin2020introduction}
\begin{equation}
\label{eq:app-inc-weight}
W_t
:=
\frac{P_t(y_{<t},Y_t)}{P_{t-1}(y_{<t})\,q(Y_t\mid x,y_{<t})},
\end{equation}
defined on the event \(\{P_{t-1}(y_{<t})>0\}\), where \((y_{<t},Y_t)\) denotes the length-\(t\) prefix ending in \(Y_t\).
For the power distribution, \(P_t(y_{1:t})=\pi_\alpha(y_{1:t}\mid x)\), so Eq.~\eqref{eq:app-inc-weight} agrees with \(W_t\) in Eq.~\eqref{eq:inc-weight-def}.

If one initializes weights at \(w_0:=1\) and updates \(w_t:=w_{t-1}W_t\), then for any completed trajectory \(y_{1:T}\) with \(\prod_{t=1}^T P_{t-1}(y_{<t})>0\),
\begin{equation}
\label{eq:app-full-weight}
w_T
=
\frac{P(y_{1:T})}{q(y_{1:T}\mid x)},
\qquad
q(y_{1:T}\mid x):=\prod_{t=1}^{T} q(y_t\mid x,y_{<t}),
\end{equation}
which is the usual full-sequence importance weight of \(y_{1:T}\) for the target \(P\) against the autoregressive proposal \(q\).
Thus each \(W_t\) is the local factor that must be ``well behaved'' if the final weights are not to explode or collapse.

\subsubsection[Why minimize the one-step variance?]{Why minimize \(\mathrm{Var}[W_t]\) at one step?}
\label{app:sis-why-var}
Condition on a fixed feasible prefix \(y_{<t}\) with \(P_{t-1}(y_{<t})>0\).
Write \(f_t(v):=P_t(y_{<t},v)/P_{t-1}(y_{<t})\) for \(v\in\mathcal{V}\), i.e., the \emph{true} conditional \(P(y_t=v\mid y_{<t})\) under \(P\).
Then \(W_t=f_t(Y_t)/q(Y_t)\) with \(Y_t\sim q\).

Whenever \(q(v)>0\) for all \(v\) with \(f_t(v)>0\), the mean is always \(\mathbb{E}[W_t\mid y_{<t}]=\sum_{v\in\mathcal{V}} q(v)\,f_t(v)/q(v)=1\).
However, \(\mathrm{Var}[W_t\mid y_{<t}]\) depends strongly on \(q\): if \(q\) places too little mass where \(f_t\) is large, occasional huge weights arise, which is the usual ``weight degeneracy'' pathology in importance sampling.
Minimizing \(\mathrm{Var}[W_t\mid y_{<t}]\) therefore makes the \emph{single-step} contribution to weight instability as small as possible (among independent proposals), holding the prefix fixed.
This is the same local objective highlighted by \citet{zhao2024probabilistic} for twist-induced proposals.

\subsubsection[Proof of the variance-minimizing proposal]{Proof of \Cref{prop:opt-proposal-power}}
\label{app:proof-opt-proposal-power}

\begin{proof}[Proof of \Cref{prop:opt-proposal-power}]
Fix \(y_{<t}\) with \(\pi_\alpha(y_{<t}\mid x)>0\) and write \(f(v):=\pi_\alpha(y_{<t},v\mid x)/\pi_\alpha(y_{<t}\mid x)\) for \(v\in\mathcal{V}\).
Then \(\sum_{v\in\mathcal{V}} f(v)=1\) and \(W_t=f(Y_t)/q(Y_t)\) under \(Y_t\sim q\), assuming \(q(v)>0\) whenever \(f(v)>0\).

Since \(\mathbb{E}[W_t]=\sum_v f(v)=1\),
\[
\mathrm{Var}[W_t]
=
\mathbb{E}[W_t^2]-1
=
\sum_{v\in\mathcal{V}} \frac{f(v)^2}{q(v)}-1.
\]
By Cauchy--Schwarz,
\[
\Big(\sum_{v\in\mathcal{V}} f(v)\Big)^2
=
\Big(\sum_{v\in\mathcal{V}} \sqrt{q(v)}\cdot \frac{f(v)}{\sqrt{q(v)}}\Big)^2
\le
\Big(\sum_{v\in\mathcal{V}} q(v)\Big)\Big(\sum_{v\in\mathcal{V}} \frac{f(v)^2}{q(v)}\Big)
=
\sum_{v\in\mathcal{V}} \frac{f(v)^2}{q(v)}.
\]
The left-hand side equals \(1\), so \(\mathrm{Var}[W_t]\ge 0\) with equality if and only if the Cauchy--Schwarz inequality is tight, i.e., \(\sqrt{q(v)}\propto f(v)/\sqrt{q(v)}\), equivalently \(q(v)\propto f(v)\).
Because \(\sum_v f(v)=1\), the unique minimizer on \(\{v:f(v)>0\}\) is \(q(v)=f(v)\), which is \(\pi_\alpha(v\mid x,y_{<t})\).

Finally, with \(\tilde\pi_{\alpha,t}\) as in Eq.~\eqref{eq:power-prefix-marg} and \(Z_\alpha\) as in the main text,
\[
f(v)
=
\frac{\pi_\alpha(y_{<t},v\mid x)}{\pi_\alpha(y_{<t}\mid x)}
=
\frac{\tilde\pi_{\alpha,t}(y_{1:t})/Z_\alpha}{\tilde\pi_{\alpha,t-1}(y_{<t})/Z_\alpha}
=
\frac{\tilde\pi_{\alpha,t}(y_{1:t})}{\tilde\pi_{\alpha,t-1}(y_{<t})},
\]
where \(y_{1:t}=(y_{<t},v)\), which is Eq.~\eqref{eq:opt-q-power}.
\end{proof}

\subsection[Proof of power self-distillation and sharpening]{Proof of~\Cref{prop:teacher-sft}}
\label{app:proof-teacher-sft}

We first provide the following lemma, which is used to bound the Hellinger distance between the MLE and the true conditional distribution for finite-class models.

\begin{lemma}[Finite-class MLE Hellinger bound~\citep{wong1995probability,geer2000empirical,zhang2006f}]
\label{lem:mle-hellinger}
Assume \(|\Pi|<\infty\) and \(\pi^{\star}\in\Pi\). Let \(D=\{(x_i,y_i)\}_{i=1}^{n}\) be i.i.d. with \(x_i\sim\mu\) and \(y_i\sim\pi^{\star}(\cdot\mid x_i)\),
and let \(\widehat\pi\in\arg\max_{\pi\in\Pi}\sum_{i=1}^{n}\log \pi(y_i\mid x_i)\) be an MLE.
Then for any \(\rho\in(0,1)\), with probability at least \(1-\rho\),
\[
\mathbb{E}_{x\sim\mu}\!\left[D_H^2\!\big(\widehat\pi(\cdot\mid x),\pi^{\star}(\cdot\mid x)\big)\right]
\le
\frac{2\log(|\Pi|\rho^{-1})}{n}.
\]
\end{lemma}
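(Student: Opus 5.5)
The plan is the standard Chernoff-plus-union-bound argument for a finite class. It uses the square-root likelihood ratio, whose conditional mean is the Bhattacharyya coefficient. I will use the convention \(D_H^2(p,q)=\sum_y(\sqrt{p(y)}-\sqrt{q(y)})^2=2\bigl(1-\sum_y\sqrt{p(y)q(y)}\bigr)\), which matches the factor \(2\) in the stated bound. With the other normalization \(1-\sum_y\sqrt{pq}\), the same argument gives the bound without the \(2\).

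\textbf{Step 1 (one fixed candidate).} Fix \(q\in\Pi\) and define
\[
Z_i(q):=\sqrt{q(y_i\mid x_i)/\pi^\star(y_i\mid x_i)}.
\]
This is well defined almost surely, since \(y_i\) lies in the support of \(\pi^\star(\cdot\mid x_i)\); it is zero where \(q\) vanishes. Conditioning on \(x_i\) gives
\[
\mathbb{E}[Z_i(q)\mid x_i]=\sum_y\sqrt{q(y\mid x_i)\pi^\star(y\mid x_i)}\le 1-\tfrac12 D_H^2\bigl(q(\cdot\mid x_i),\pi^\star(\cdot\mid x_i)\bigr),
\]
with equality if we sum only over the support of \(\pi^\star\). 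Averaging over \(x_i\sim\mu\) and using independence across \(i\), I get
\[
\mathbb{E}\Bigl[\prod_{i=1}^n Z_i(q)\Bigr]\le\bigl(1-\tfrac12\,\mathbb{E}_x D_H^2(q,\pi^\star)\bigr)^n\le\exp\bigl(-\tfrac n2\,\mathbb{E}_x D_H^2(q,\pi^\star)\bigr).
\]

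\textbf{Step 2 (Markov and union bound).} Markov's inequality applied to \(\prod_i Z_i(q)\) gives, with probability at least \(1-\rho/|\Pi|\),
\[
\tfrac12\sum_{i=1}^n\log\frac{q(y_i\mid x_i)}{\pi^\star(y_i\mid x_i)}\le-\tfrac n2\,\mathbb{E}_x D_H^2(q,\pi^\star)+\log(|\Pi|\rho^{-1}).
\]
A union bound over the finitely many \(q\in\Pi\) makes this hold simultaneously for all \(q\) with probability at least \(1-\rho\).

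\textbf{Step 3 (use the MLE property).} Since \(\pi^\star\in\Pi\) and \(\widehat\pi\) maximizes the log-likelihood, \(\sum_i\log\widehat\pi(y_i\mid x_i)\ge\sum_i\log\pi^\star(y_i\mid x_i)\). So the left-hand side at \(q=\widehat\pi\) is nonnegative. Substituting \(q=\widehat\pi\) into the uniform bound of Step 2, which is legitimate even though \(\widehat\pi\) depends on the data, gives
\[
\mathbb{E}_x D_H^2(\widehat\pi,\pi^\star)\le\frac{2\log(|\Pi|\rho^{-1})}{n}.
\]

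\textbf{Main obstacle.} The only delicate points are bookkeeping. The first is zero-probability events: \(q\) may vanish where \(\pi^\star\) does not, and I handle this by summing only over the support of \(\pi^\star\), which only decreases the Bhattacharyya sum. The second is the data-dependence of \(\widehat\pi\), which is exactly why the union bound in Step 2 is taken before substituting \(q=\widehat\pi\). Beyond that, the result is the classical finite-class argument in the cited references.
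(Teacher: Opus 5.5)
Your proof is correct. It is the standard argument from the cited references: a Chernoff/Markov bound on the square-root likelihood ratio, a union bound over \(\Pi\), and then the MLE inequality against \(\pi^\star\in\Pi\). The paper gives no proof of \Cref{lem:mle-hellinger} and only cites it, and your convention \(D_H^2(p,q)=\sum_y(\sqrt{p(y)}-\sqrt{q(y)})^2\) matches the one used in the proof of \Cref{prop:teacher-sft}, so the factor \(2\) is right. (Minor point: \(\mathbb{E}[Z_i(q)\mid x_i]=1-\tfrac12 D_H^2\) holds with equality, because terms with \(\pi^\star(y\mid x_i)=0\) contribute nothing to \(\sum_y\sqrt{q\pi^\star}\), so your discussion of the support is unnecessary, though harmless.)
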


Using this lemma, we can prove~\Cref{prop:teacher-sft} as follows.

\begin{proof}[Proof of~\Cref{prop:teacher-sft}]
Define the failure event \(F(x):=\{\widehat\pi(\bm{y}^\star(x)\mid x)\le 1-\delta\}\).
By a simple inclusion,
\[
F(x)\subseteq
\Big\{\pi_\alpha(\bm{y}^\star(x)\mid x)\le 1-\tfrac{\delta}{2}\Big\}
\ \cup\
\Big\{\widehat\pi(\bm{y}^\star(x)\mid x)\le 1-\delta,\ \pi_\alpha(\bm{y}^\star(x)\mid x)>1-\tfrac{\delta}{2}\Big\}.
\]
Taking \(\mathbb{P}_{x\sim\mu}[\cdot]\) yields
\begin{equation}
\label{eq:teacher-sft-step1}
\mathbb{P}_{x\sim\mu}[F(x)]
\le
\mathbb{P}_{x\sim\mu}\big[\pi_\alpha(\bm{y}^\star(x)\mid x)\le 1-\tfrac{\delta}{2}\big]
\;+\;
\mathbb{P}_{x\sim\mu}[E(x)],
\end{equation}
where \(E(x):=\{\widehat\pi(\bm{y}^\star(x)\mid x)\le 1-\delta,\ \pi_\alpha(\bm{y}^\star(x)\mid x)>1-\tfrac{\delta}{2}\}\).

Let \(B(x):=\mathcal{Y}\setminus \bm{y}^\star(x)\).
For each \(x\), write \(\widehat\pi_x:=\widehat\pi(\cdot\mid x)\) and \(p_x:=\pi_\alpha(\cdot\mid x)\).
For two distributions \(p,q\in\Delta(\mathcal{Y})\), define the squared Hellinger distance
\[
D_H^2(p,q):=\sum_{y\in\mathcal{Y}}\big(\sqrt{p(y)}-\sqrt{q(y)}\big)^2.
\]
By the reverse triangle inequality applied to the vectors \((\sqrt{\widehat\pi_x(y)})_{y\in B(x)}\) and \((\sqrt{p_x(y)})_{y\in B(x)}\),
\begin{equation}
\label{eq:teacher-sft-hellinger-lb}
D_H^2(\widehat\pi_x,p_x)
\ge
\sum_{y\in B(x)}\big(\sqrt{\widehat\pi_x(y)}-\sqrt{p_x(y)}\big)^2
\ge
\big(\sqrt{\widehat\pi(B(x)\mid x)}-\sqrt{\pi_\alpha(B(x)\mid x)}\big)^2.
\end{equation}
On the event \(E(x)\), we have \(\widehat\pi(B(x)\mid x)=1-\widehat\pi(\bm{y}^\star(x)\mid x)\ge\delta\) and
\(\pi_\alpha(B(x)\mid x)=1-\pi_\alpha(\bm{y}^\star(x)\mid x)<\delta/2\), so Equation \eqref{eq:teacher-sft-hellinger-lb} implies
\[
D_H^2(\widehat\pi_x,p_x)\ge \big(\sqrt{\delta}-\sqrt{\delta/2}\big)^2
=\Big(1-\tfrac{1}{\sqrt2}\Big)^2\delta=:c_0\,\delta.
\]
Therefore \(\mathbf{1}\{E(x)\}\le D_H^2(\widehat\pi_x,p_x)/(c_0\delta)\), and hence
\begin{equation}
\label{eq:teacher-sft-step2}
\mathbb{P}_{x\sim\mu}[E(x)]
\le
\frac{1}{c_0\delta}\,\mathbb{E}_{x\sim\mu}\big[D_H^2(\widehat\pi_x,p_x)\big].
\end{equation}

Finally, by the finite-class MLE Hellinger bound (Lemma \ref{lem:mle-hellinger}) and \(|\Pi_\alpha|\le M\), with probability at least \(1-\rho\),
\[
\mathbb{E}_{x\sim\mu}\big[D_H^2(\widehat\pi_x,p_x)\big]
\le
\frac{2\log(M\rho^{-1})}{n}.
\]
Combining Eqs.~\eqref{eq:teacher-sft-step1} and \eqref{eq:teacher-sft-step2} with the bound above yields
\[
\mathbb{P}_{x\sim\mu}\big[\widehat\pi(\bm{y}^\star(x)\mid x)\le 1-\delta\big]
\le
\mathbb{P}_{x\sim\mu}\big[\pi_\alpha(\bm{y}^\star(x)\mid x)\le 1-\tfrac{\delta}{2}\big]
\;+\;
\frac{2}{c_0}\cdot\frac{\log(M\rho^{-1})}{\delta\,n}.
\]
Absorbing constants proves Eq.~\eqref{eq:teacher-distillation}.

\paragraph{Convergence of the upper bound.}
The MLE term satisfies
\(
\frac{\log(M\rho^{-1})}{\delta n}\to 0
\)
as \(n\to\infty\).

For the limit of \(\alpha\), fix \(x\in\mathcal{X}\) and write \(m(x):=\max_{y\in\mathcal{Y}}\pi(y\mid x)\).
By definition of \(\bm{y}^\star(x)\), we have \(\pi(y\mid x)=m(x)\) for all \(y\in\bm{y}^\star(x)\) and \(\pi(y\mid x)<m(x)\) for all \(y\in B(x)=\mathcal{Y}\setminus\bm{y}^\star(x)\).
The normalizing constant of the power distribution satisfies
\begin{align}
    Z_\alpha(x)&:=\sum_{y'\in\mathcal{Y}}\pi(y'\mid x)^{\alpha} \\
&=
\sum_{y'\in\bm{y}^\star(x)}m(x)^{\alpha}
\;+\;
\sum_{y'\in B(x)}\pi(y'\mid x)^{\alpha} \\
&=
\bigl|\bm{y}^\star(x)\bigr|\,m(x)^{\alpha}
\;+\;
\sum_{y'\in B(x)}\pi(y'\mid x)^{\alpha}.
\end{align}
For each \(y'\in B(x)\), the ratio \(\pi(y'\mid x)/m(x)\) lies in \([0,1)\), hence \(\bigl(\pi(y'\mid x)/m(x)\bigr)^{\alpha}\to 0\) as \(\alpha\to\infty\).
Because \(B(x)\) is finite, \(\sum_{y'\in B(x)}\pi(y'\mid x)^{\alpha}=o\bigl(m(x)^{\alpha}\bigr)\), and therefore
\begin{equation}
\label{eq:power-mass-to-mode}
\pi_\alpha(\bm{y}^\star(x)\mid x)
=
\frac{\bigl|\bm{y}^\star(x)\bigr|\,m(x)^{\alpha}}{Z_\alpha(x)}
\xrightarrow[\alpha\to\infty]{}1.
\end{equation}
The indicators \(\mathbf{1}\{\pi_\alpha(\bm{y}^\star(x)\mid x)\le 1-\tfrac{\delta}{2}\}\) converge to \(0\) for \(\mu\)-almost every \(x\) as \(\alpha\to\infty\) by Eq.~\eqref{eq:power-mass-to-mode}.
Since indicators are bounded by \(1\), dominated convergence yields
\[
\mathbb{P}_{x\sim\mu}\big[\pi_\alpha(\bm{y}^\star(x)\mid x)\le 1-\tfrac{\delta}{2}\big]
=
\mathbb{E}_{x\sim\mu}\big[\mathbf{1}\{\pi_\alpha(\bm{y}^\star(x)\mid x)\le 1-\tfrac{\delta}{2}\}\big]
\xrightarrow[\alpha\to\infty]{}0.
\]
Thus, the second term in Eq.~\eqref{eq:teacher-distillation} converges to \(0\) as \(\alpha\to\infty\).
Together with the \(n\to\infty\) limit of the first term, the full upper bound converges to \(0\).
\end{proof}

\subsection[Proof of the covariance derivative proposition]{Proof of \Cref{prop:Rprime-cov}}
\label{app:proof-Rprime-cov}

\begin{proof}[Proof of \Cref{prop:Rprime-cov}]
Recall
\[
\pi_\alpha(y)=\frac{\pi(y)^\alpha}{Z_\alpha}=\frac{e^{\alpha \log \pi(y)}}{Z_\alpha},
\qquad
Z_\alpha:=\sum_{y'\in\mathcal{Y}}\pi(y')^\alpha=\sum_{y'}e^{\alpha \log \pi(y')}.
\]

Differentiating \(\pi_\alpha(y)\) with respect to \(\alpha\) yields
\begin{align*}
\frac{\partial}{\partial\alpha}\pi_\alpha(y)
&=\frac{\partial}{\partial\alpha}\left(\frac{e^{\alpha \log \pi(y)}}{Z_\alpha}\right) \\
&=\frac{\log \pi(y)e^{\alpha \log \pi(y)}Z_\alpha-e^{\alpha \log \pi(y)} \frac{\partial}{\partial \alpha} Z_\alpha}{Z_\alpha^2}\\
&=\pi_\alpha(y)\left(\log \pi(y)-\frac{\partial}{\partial\alpha}\log Z_\alpha\right).
\end{align*}

Using
\begin{align}
\frac{\partial}{\partial\alpha}\log Z_\alpha
=\frac{1}{Z_\alpha}\sum_{y'} \log \pi(y')e^{\alpha \log \pi(y')}
=\sum_{y'}\pi_\alpha(y')\log \pi(y')
=\mathbb{E}_{\pi_\alpha}[\log\pi],
\end{align}
we obtain
\begin{align}
\frac{\partial}{\partial\alpha}\mathbb{E}_{\pi_\alpha}[r^\star]
&=\sum_y r^\star(y)\frac{\partial}{\partial\alpha}\pi_\alpha(y) \\
&=\sum_y r^\star(y)\pi_\alpha(y)\left(\log \pi(y)-\mathbb{E}_{\pi_\alpha}[\log \pi]\right)\\
&=\mathbb{E}_{\pi_\alpha}\!\big[r^\star(\log \pi-\mathbb{E}_{\pi_\alpha}[\log \pi])\big] \\
&=\mathrm{Cov}_{\pi_\alpha}(r^\star,\log \pi) \\
&=\mathrm{Cov}_{\pi_\alpha}(r^\star,r_{\mathrm{self}}).
\end{align}
\end{proof}

\subsection{Closed-form optimizer for KL-regularized RL: restatement and proof}
\label{app:proof-kl-regularized-tilt}

We restate the standard closed-form solution of KL-regularized RL used in \Cref{sec:power-rl}.

\begin{proposition}[Closed-form optimizer for KL-regularized RL~\citep{levine2018reinforcement}]
\label{prop:kl-regularized-tilt}
For each \(x\in\mathcal{X}\), let \(\pi_\beta^\star\) be the reward-tilted distribution defined in Eq.~\eqref{eq:tilted-policy}, and assume \(Z_r(x)<\infty\) for every \(x\in\mathcal{X}\).
Then \(\pi_\beta^\star\) maximizes \(J_\beta(q;\pi,r)\) in Eq.~\eqref{eq:soft-rl-obj} over all \(q:\mathcal{X}\to\Delta(\mathcal{Y})\).
\end{proposition}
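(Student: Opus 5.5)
The approach is to prove the statement pointwise in \(x\) via the Gibbs variational identity. The key step rewrites the inner objective as a negative KL divergence to \(\pi_\beta^\star\) plus a term independent of \(q\); this is the same identity already used in Eq.~\eqref{eq:rl-equals-kl}.

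Fix \(x\) and abbreviate \(q=q(\cdot\mid x)\), \(\pi=\pi(\cdot\mid x)\), \(\pi^\star=\pi_\beta^\star(\cdot\mid x)\) and \(Z=Z_r(x)\), which is finite by assumption. Note that \(Z\ge \pi(y)\exp(\beta^{-1}r(x,y))>0\) for any \(y\) in the support of \(\pi\), so \(\pi^\star\) is a well-defined distribution with the same support as \(\pi\).

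Consider first any \(q\) with \(D_{\mathrm{KL}}(q\|\pi)<\infty\), so that \(q\ll\pi\). On the support of \(\pi\), the definition of \(\pi^\star\) gives \(\log\pi(y)=\log\pi^\star(y)-\beta^{-1}r(x,y)+\log Z\). Substituting this into \(\beta\,\mathbb{E}_q[\log q-\log\pi]\), the terms \(\pm\,\mathbb{E}_q[r]\) cancel, and we obtain
\[
\mathbb{E}_{q}[r(x,y)]-\beta D_{\mathrm{KL}}(q\|\pi)=-\beta D_{\mathrm{KL}}(q\|\pi^\star)+\beta\log Z .
\]
If instead \(D_{\mathrm{KL}}(q\|\pi)=\infty\), the inner objective equals \(-\infty\), provided the reward term is finite. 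Such \(q\) then cannot beat \(\pi^\star\), whose objective value is \(\beta\log Z>-\infty\).

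Since \(D_{\mathrm{KL}}\ge0\) by Gibbs' inequality (Jensen applied to \(-\log\)), with equality iff \(q=\pi^\star\), the inner objective is at most \(\beta\log Z\) for every \(q\), and this value is attained exactly at \(q=\pi^\star\).

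Finally, integrate over \(x\sim\mu\). Because the pointwise inequality holds for every \(x\), we get \(J_\beta(q;\pi,r)\le J_\beta(\pi_\beta^\star;\pi,r)\) for every \(q:\mathcal{X}\to\Delta(\mathcal{Y})\). Maximization over the policy is decoupled across prompts, so no interchange-of-sup issue arises. Measurability of \(x\mapsto\pi_\beta^\star(\cdot\mid x)\) is inherited from \(\pi\) and \(r\).

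The main obstacle is bookkeeping rather than any deep step. Specifically, the argument must:
\begin{itemize}
\item justify that \(\mathbb{E}_q[r]\) is well-defined when \(\mathcal{Y}\) is finite, which is immediate because \(\mathcal{Y}=\mathcal{V}^T\);
\item handle \(r\) taking the value \(-\infty\) off the support of \(\pi\), as happens for \(r_{\mathrm{self}}\);
\item ensure that no ambiguous \(\infty-\infty\) expression appears.
\end{itemize}
I would resolve all three by restricting throughout to \(q\ll\pi\) and treating other \(q\) as having objective \(-\infty\), with the convention \(0\log0=0\).
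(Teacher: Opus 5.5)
Your proposal is correct and follows essentially the same route as the paper. You fix \(x\), rewrite the inner objective as \(-\beta D_{\mathrm{KL}}(q(\cdot\mid x)\,\|\,\pi_\beta^\star(\cdot\mid x))+\beta\log Z_r(x)\), apply Gibbs' inequality with equality iff \(q(\cdot\mid x)=\pi_\beta^\star(\cdot\mid x)\), and conclude from the decoupling of \(J_\beta\) across prompts; your additional bookkeeping on \(q\ll\pi\) and \(-\infty\)-valued rewards (as for \(r_{\mathrm{self}}\)) is a valid refinement of details the paper's proof leaves implicit.
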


\begin{proof}[Proof of \Cref{prop:kl-regularized-tilt}]
Fix \(x\in\mathcal{X}\) and write \(f(y):=\pi(y\mid x)\exp\!\big(\beta^{-1}r(x,y)\big)\) and \(Z:=Z_r(x)=\sum_{y'\in\mathcal{Y}}f(y')\).
For any \(q(\cdot\mid x)\in\Delta(\mathcal{Y})\), expanding the KL divergence against \(\pi_\beta^\star(\cdot\mid x)\) gives
\begin{align*}
\mathbb{E}_{y\sim q(\cdot\mid x)}[r(x,y)]
-\beta\,D_{\mathrm{KL}}\!\big(q(\cdot\mid x)\,\|\,\pi(\cdot\mid x)\big)
&=
-\beta\sum_{y\in\mathcal{Y}} q(y\mid x)\log\frac{q(y\mid x)}{f(y)/Z}
\\
&=
-\beta\,D_{\mathrm{KL}}\!\big(q(\cdot\mid x)\,\|\,\pi_\beta^\star(\cdot\mid x)\big)
+\beta\log Z,
\end{align*}
where we used \(\pi_\beta^\star(y\mid x)=f(y)/Z\) from Eq.~\eqref{eq:tilted-policy}.
Since \(D_{\mathrm{KL}}(\cdot\,\|\,\pi_\beta^\star(\cdot\mid x))\ge 0\) with equality if and only if \(q(\cdot\mid x)=\pi_\beta^\star(\cdot\mid x)\), the inner objective is uniquely maximized at \(q(\cdot\mid x)=\pi_\beta^\star(\cdot\mid x)\).
Because \(J_\beta(q;\pi,r)\) is an expectation over \(x\sim\mu\) of these decoupled per-\(x\) objectives, the unique global maximizer is \(q=\pi_\beta^\star\).
\end{proof}

\section{Experimental details}
\label{sec_app:experimental-details}
\subsection{Setup}
\label{sec_app:experimental-setups}

\paragraph{Models and datasets.}
We used Qwen2.5-Math-7B~\citep{yang2024qwen2}, Qwen2.5-7B~\citep{yang2024qwen}, and Phi-3.5-mini-instruct~\citep{abdin2024phi} models on the following datasets.
\begin{itemize}
    \item \textbf{Mathematics.} We used the MATH dataset~\citep{lightman2024lets}, which consists of 12,500 competition-style math problems spanning seven categories (e.g., geometry, number theory, and precalculus), with 7,500 training and 5,000 test problems. For evaluation, we used MATH500, a randomly selected subset of the MATH test set standardized by OpenAI\footnote{\url{https://huggingface.co/datasets/HuggingFaceH4/MATH-500}}. For distillation, we sampled 500 examples from MATH with MATH500 removed\footnote{\url{https://raw.githubusercontent.com/rasbt/math_full_minus_math500/main/math_full_minus_math500.json}}.
    \item \textbf{Programming.} For evaluation, we used HumanEval~\citep{chen2021evaluating}, a set of $164$ handwritten programming problems covering algorithms, reasoning, mathematics, and language understanding; each problem includes unit tests, and a solution was correct if it passed all tests. For distillation, we used MBPP~\citep{austin2021program}, a benchmark of crowd-sourced Python programming problems designed to be solvable by entry-level programmers. We used $420$ questions from the sanitized subset, excluding the prompt split.
    \item \textbf{Multiple-choice science.} We used GPQA~\citep{rein2024gpqa}, a multiple-choice science benchmark (physics, chemistry, and biology) requiring advanced reasoning. For evaluation, we used GPQA-Diamond, a high-quality subset of $198$ questions. For distillation, we used the remaining $250$ GPQA questions after removing any overlap with GPQA-Diamond.
\end{itemize}

\paragraph{Power sampling.}
We used the power sampling algorithm of~\citet{karan2026reasoning}, largely following their hyperparameters. Specifically, we used \(\alpha=4.0\), maximum sampling token length $3072$, block size $192$, $N_{\mathrm{MCMC}}=10$, and the proposal LLM $p_{\mathrm{prop}}$ set to the base model with sampling temperature \(\tau=1/\alpha=0.25\). The token-wise Temperature baseline uses the same \(\tau\), applying the corresponding local power transform independently at each decoding step. For the randomly initialized model (RandW; \Cref{sec:numerical}), we instead used maximum token length $1024$ and $N_{\mathrm{MCMC}}=2$, because under the default settings (maximum token length $3072$ and $N_{\mathrm{MCMC}}=10$) EOS tokens rarely appeared for RandW and wall-clock sampling time became significantly longer.

\paragraph{Self-reward computation.}
To report \(r_{\mathrm{self}}\), we computed, under the evaluated model, the average log-likelihood over completion tokens, excluding prompt tokens.
Our theoretical analysis assumes completions of a fixed length~\(T\), but in our experiments completion lengths vary across prompts and sampling methods, so we normalize by the number of completion tokens to remove length bias in \(r_{\mathrm{self}}\).

\paragraph{Synthetic random rewards.}
For the synthetic-reward probe in \Cref{fig:reward_corr}, each completion~\(y\) is mapped to a scalar in \([0,1)\) by applying SHA-256 to the UTF-8 encoding of~\(y\) and interpreting the leading 64 bits of the digest as an unsigned fraction. Let \(z_{\mathrm{self}}(y)\) and \(z_{r}(y)\) denote the z-scores of the self-reward \(r_{\mathrm{self}}(x,y)\) and of the hash reward above, each computed with the corresponding pooled global sample mean and sample standard deviation.
We then define
\[
r_{\lambda}(y) \;:=\; \lambda\, z_{\mathrm{self}}(y) + \sqrt{1-\lambda^{2}}\,\bigl(z_{r}(y)+\varepsilon(y)\bigr),
\qquad \lambda\in[-1,1],
\]
where the \(\varepsilon(y)\) are i.i.d.\ \(\mathcal{N}(0,\sigma^{2})\) with \(\sigma=0.5\).
\Cref{fig:reward_corr} sweeps~\(\lambda\) and plots the mean increase in~\(r_{\lambda}\) under power versus standard sampling against the empirical covariance between \(r_{\mathrm{self}}\) and \(r_{\lambda}\), using completions produced under standard sampling.
The construction is designed to sweep \(\mathrm{Cov}(r_{\lambda},r_{\mathrm{self}})\) in a controlled way; we plot empirical gain against this controlled covariance to visualize the qualitative rate prediction of \Cref{prop:Rprime-cov}.

\paragraph{Distillation.}
We trained the student with supervised fine-tuning on the offline power-sampled dataset. Concretely, we minimized the standard token-level cross-entropy loss of a causal language model on the teacher-generated completion, masking the prompt tokens (i.e., the loss was computed only on the completion tokens). The student was initialized from the base model and was trained with LoRA adapters (\(r{=}16\), \(\alpha{=}32\), dropout \(0.05\)) applied to \texttt{q\_proj,k\_proj,v\_proj,o\_proj,gate\_proj,up\_proj,down\_proj}. We trained the models for 3 epochs using the AdamW optimizer with a weight decay of 0.01 and a linear warmup ratio of 0.03. The learning rate was tuned per dataset and model as summarized in \Cref{tab:lr_distill}. We used per-device batch size 1 with 8 gradient accumulation steps, and enabled gradient checkpointing. We set the maximum sequence length to 1024 tokens to keep activation memory manageable on a single GPU. Teacher completions exceeding this cap were truncated, and the cross-entropy loss was computed on all in-window completion tokens. The truncation affected only a minority of completions (e.g., 83.6\% of Qwen2.5-Math-7B completions on MATH fit fully within the cap), and each in-window token still provides a valid distillation signal toward \(\pi_\alpha\).

\begin{table}[htbp]
\centering
\caption{Learning rate used for SFT distillation, per (dataset, model) pair.}
\label{tab:lr_distill}
\begin{tabular}{lccc}
\toprule
Dataset & Qwen2.5-7B & Qwen2.5-Math-7B & Phi-3.5-mini-instruct \\
\midrule
MATH           & $1 \times 10^{-5}$ & $1 \times 10^{-5}$ & $1 \times 10^{-3}$ \\
HumanEval/MBPP & $1 \times 10^{-5}$ & $1 \times 10^{-5}$ & $5 \times 10^{-4}$ \\
GPQA           & $1 \times 10^{-5}$ & $1 \times 10^{-4}$ & $2 \times 10^{-4}$ \\
\bottomrule
\end{tabular}
\end{table}

\paragraph{Hardware and execution time.}
All experiments were conducted on GPU nodes equipped with two Intel Xeon Platinum 8360Y CPUs, 512 GiB of host memory, and eight NVIDIA A100 GPUs with 40 GiB of memory each. On a single GPU, supervised fine-tuning of one student per dataset and model finished in under one hour, while teacher generation via power sampling (\Cref{alg:samp}) took more than one day per dataset and model.
The total compute is on the order of a few hundred A100-GPU-hours.


\clearpage

\subsection{Additional results}

\subsubsection{Other datasets and models}
\label{sec_app:other}
This section reports results on additional dataset--model combinations that are not shown in the main text.
In all cases, the distilled model has a higher~$r^\star$ than the base under standard autoregressive decoding.
The distilled model often attains~$r^\star$ comparable to that of the corresponding base model with power sampling.


\begin{table}[htbp]
\centering
\caption{MATH: true reward~$r^{\star}$ (accuracy) and self-reward~$r_{\mathrm{self}}$. Left: all completions, means with $\pm$ std over seeds. Right: self-reward Best-of-\(N\) over samples generated with different seeds (max $r_{\mathrm{self}}$ per item, then same aggregation). 4 seeds. }
\label{tab:eval2-math}
\begin{tabular}{llcccc}
\toprule
& & \multicolumn{2}{c}{All completions} & \multicolumn{2}{c}{Self-reward Best-of-\(N\)} \\
\cmidrule(lr){3-4} \cmidrule(lr){5-6}
Model & Sampling & $r^{\star} (\uparrow)$ & $r_{\mathrm{self}}$ & $r^{\star} (\uparrow)$ & $r_{\mathrm{self}}$ \\
\midrule
\multirow{2}{*}{Qwen / Base} & Standard & $0.410 \pm 0.004$ & $-0.405 \pm 0.049$ & $0.579$ & $-0.185$ \\
 & Power & $\mathbf{0.706} \pm 0.017$ & $-0.093 \pm 0.001$ & $0.677$ & $-0.080$ \\
\multirow{2}{*}{Qwen / Distilled} & Standard & $0.631 \pm 0.013$ & $-0.094 \pm 0.002$ & $\mathbf{0.682}$ & $-0.064$ \\
 & Temperature & $0.661 \pm 0.006$ & $\mathbf{-0.073} \pm 0.001$ & $0.676$ & $\mathbf{-0.060}$ \\
\midrule
\multirow{2}{*}{Phi / Base} & Standard & $0.449 \pm 0.014$ & $-0.234 \pm 0.001$ & $0.476$ & $-0.173$ \\
 & Power & $\mathbf{0.513} \pm 0.017$ & $-0.175 \pm 0.002$ & $\mathbf{0.493}$ & $-0.155$ \\
\multirow{2}{*}{Phi / Distilled} & Standard & $0.470 \pm 0.000$ & $-0.118 \pm 0.001$ & $0.481$ & $-0.088$ \\
 & Temperature & $0.457 \pm 0.014$ & $\mathbf{-0.106} \pm 0.001$ & $0.461$ & $\mathbf{-0.086}$ \\
\bottomrule
\end{tabular}
\end{table}


\begin{table}[htbp]
\centering
\caption{HumanEval: true reward~$r^{\star}$ (HumanEval pass) and self-reward~$r_{\mathrm{self}}$. Left: all completions, means with $\pm$ std over seeds. Right: self-reward Best-of-\(N\) over samples generated with different seeds (max $r_{\mathrm{self}}$ per item, then same aggregation). 4 seeds. }
\label{tab:eval2-he}
\begin{tabular}{llcccc}
\toprule
& & \multicolumn{2}{c}{All completions} & \multicolumn{2}{c}{Self-reward Best-of-\(N\)} \\
\cmidrule(lr){3-4} \cmidrule(lr){5-6}
Model & Sampling & $r^{\star} (\uparrow)$ & $r_{\mathrm{self}}$ & $r^{\star} (\uparrow)$ & $r_{\mathrm{self}}$ \\
\midrule
\multirow{2}{*}{Qwen-Math / Base} & Standard & $0.320 \pm 0.016$ & $-0.741 \pm 0.012$ & $0.383$ & $-0.427$ \\
 & Power & $0.538 \pm 0.030$ & $\mathbf{-0.144} \pm 0.003$ & $0.562$ & $\mathbf{-0.106}$ \\
\multirow{2}{*}{Qwen-Math / Distilled} & Standard & $0.416 \pm 0.023$ & $-0.563 \pm 0.040$ & $0.452$ & $-0.334$ \\
 & Temperature & $\mathbf{0.541} \pm 0.005$ & $-0.304 \pm 0.001$ & $\mathbf{0.566}$ & $-0.208$ \\
\midrule
\multirow{2}{*}{Qwen / Base} & Standard & $0.326 \pm 0.025$ & $-0.966 \pm 0.046$ & $0.376$ & $-0.426$ \\
 & Power & $\mathbf{0.573} \pm 0.020$ & $\mathbf{-0.130} \pm 0.004$ & $0.568$ & $\mathbf{-0.096}$ \\
\multirow{2}{*}{Qwen / Distilled} & Standard & $0.425 \pm 0.029$ & $-0.849 \pm 0.017$ & $0.470$ & $-0.325$ \\
 & Temperature & $0.541 \pm 0.017$ & $-0.479 \pm 0.024$ & $\mathbf{0.600}$ & $-0.235$ \\
\midrule
\multirow{2}{*}{Phi / Base} & Standard & $0.549 \pm 0.021$ & $-0.913 \pm 0.012$ & $0.562$ & $-0.589$ \\
 & Power & $0.712 \pm 0.027$ & $\mathbf{-0.330} \pm 0.004$ & $\mathbf{0.734}$ & $\mathbf{-0.294}$ \\
\multirow{2}{*}{Phi / Distilled} & Standard & $0.634 \pm 0.031$ & $-0.730 \pm 0.029$ & $0.602$ & $-0.473$ \\
 & Temperature & $\mathbf{0.715} \pm 0.020$ & $-0.627 \pm 0.028$ & $0.675$ & $-0.447$ \\
\bottomrule
\end{tabular}
\end{table}


\begin{table}[htbp]
\centering
\caption{GPQA: true reward~$r^{\star}$ (accuracy) and self-reward~$r_{\mathrm{self}}$. Left: all completions, means with $\pm$ std over seeds. Right: self-reward Best-of-\(N\) over samples generated with different seeds (max $r_{\mathrm{self}}$ per item, then same aggregation). 4 seeds. }
\label{tab:eval2-gpqa}
\begin{tabular}{llcccc}
\toprule
& & \multicolumn{2}{c}{All completions} & \multicolumn{2}{c}{Self-reward Best-of-\(N\)} \\
\cmidrule(lr){3-4} \cmidrule(lr){5-6}
Model & Sampling & $r^{\star} (\uparrow)$ & $r_{\mathrm{self}}$ & $r^{\star} (\uparrow)$ & $r_{\mathrm{self}}$ \\
\midrule
\multirow{2}{*}{Qwen-Math / Base} & Standard & $0.100 \pm 0.025$ & $-0.675 \pm 0.076$ & $0.103$ & $-0.675$ \\
 & Power & $\mathbf{0.277} \pm 0.022$ & $\mathbf{-0.088} \pm 0.002$ & $0.279$ & $\mathbf{-0.087}$ \\
\multirow{2}{*}{Qwen-Math / Distilled} & Standard & $0.275 \pm 0.004$ & $-0.165 \pm 0.001$ & $\mathbf{0.281}$ & $-0.113$ \\
 & Temperature & $\mathbf{0.277} \pm 0.001$ & $-0.149 \pm 0.001$ & $0.277$ & $-0.109$ \\
\midrule
\multirow{2}{*}{Qwen / Base} & Standard & $0.244 \pm 0.017$ & $-1.531 \pm 0.185$ & $0.245$ & $-1.527$ \\
 & Power & $0.283 \pm 0.033$ & $\mathbf{-0.118} \pm 0.001$ & $0.287$ & $\mathbf{-0.118}$ \\
\multirow{2}{*}{Qwen / Distilled} & Standard & $0.280 \pm 0.035$ & $-0.437 \pm 0.098$ & $0.278$ & $-0.426$ \\
 & Temperature & $\mathbf{0.285} \pm 0.025$ & $-0.210 \pm 0.007$ & $\mathbf{0.291}$ & $-0.201$ \\
\midrule
\multirow{2}{*}{Phi / Base} & Standard & $0.223 \pm 0.027$ & $-0.802 \pm 0.023$ & $0.223$ & $-0.800$ \\
 & Power & $\mathbf{0.309} \pm 0.019$ & $\mathbf{-0.215} \pm 0.004$ & $\mathbf{0.309}$ & $\mathbf{-0.214}$ \\
\multirow{2}{*}{Phi / Distilled} & Standard & $0.268 \pm 0.004$ & $-0.321 \pm 0.010$ & $0.267$ & $-0.319$ \\
 & Temperature & $0.292 \pm 0.012$ & $-0.284 \pm 0.059$ & $0.298$ & $-0.262$ \\
\bottomrule
\end{tabular}
\end{table}

\clearpage

\subsubsection[Power infinity]{Power$(\infty)$}
\label{sec_app:power-infty}

We also evaluated Power$(\infty)$ using Qwen2.5-Math-7B on MATH500.
This variant runs the MH power-sampling loop and accepts a proposal \(y'\) if and only if \(\pi(y'\mid x)>\pi(y\mid x)\) (\Cref{alg:samp}), corresponding to the limit \(\alpha\to\infty\).

\begin{table}[htbp]
\centering
\caption{Power$(\infty)$ results for Qwen2.5-Math-7B on MATH500. Left: all completions, means with $\pm$ std over seeds. Right: self-reward Best-of-\(N\) over samples generated with different seeds.}
\label{tab:power-infty}
\begin{tabular}{lcccc}
\toprule
& \multicolumn{2}{c}{All completions} & \multicolumn{2}{c}{Self-reward Best-of-\(N\)} \\
\cmidrule(lr){2-3} \cmidrule(lr){4-5}
Sampling & $r^{\star} (\uparrow)$ & $r_{\mathrm{self}}$ & $r^{\star} (\uparrow)$ & $r_{\mathrm{self}}$ \\
\midrule
Power$(\infty)$ & $\mathbf{0.728 \pm 0.012}$ & $-0.075 \pm 0.001$ & $0.736$ & $-0.061$ \\
\bottomrule
\end{tabular}
\end{table}

\clearpage

\subsubsection{Qualitative results}
\label{sec_app:qualitative}

This section presents full completions for one MATH-style geometry problem summarized in \Cref{tab:qualitative_math} with the gold answer $x+y=17$.
The prompt is:
\begin{quote}\small\itshape
The coordinates of a parallelogram are $(5,3)$, $(6,8)$, $(7,4)$, and $(x,y)$ with $x>7$.
What is the value of $x+y$?
\end{quote}

\begin{figure}[htbp]
  \centering
  \begingroup
  \fvset{
    fontsize=\relsize{-1},
    frame=single,
    framerule=0.75pt,
    framesep=3mm,
    rulecolor=\color{qualoutframe},
    breaklines=true,
    breakanywhere=false,
    breakautoindent=false,
    breakindent=0pt,
    breaksymbol={},
  }
  \captionsetup[subfigure]{position=top, labelfont=bf, font=footnotesize}
  \begin{subfigure}{\linewidth}
\begin{Verbatim}
The coordinates of a parallelogram are $(5,3)$, $(6,8)$, $(7,4)$, and $(x,y)$ with $x>7$.
What is the value of $x+y$?
\end{Verbatim}
    \caption{Question.}\label{subfig:qual_ex1_question}
  \end{subfigure}
  \begin{subfigure}{\linewidth}
\begin{Verbatim}
To find the missing coordinate of the parallelogram, we can use the property that opposite sides of a parallelogram are equal and parallel. Let's denote the given points as A(5, 3), B(6, 8), and C(7, 4). The missing point is D(x, y).

First, we need to find the vector AB and apply it to point C to find the coordinates of D. The vector AB is obtained by subtracting the coordinates of A from B: AB = (6-5, 8-3) = (1, 5).

To find the coordinates of D, we add the vector AB to the coordinates of C: D = C + AB = (7 + 1, 4 + 5) = (8, 9).

Therefore, the value of $x + y$ is $8 + 9 = 17$.

\end{Verbatim}
    \caption{Base sampling.}\label{subfig:qual_ex1_plain}
  \end{subfigure}
  \par\vspace{0.75em}
  \begin{subfigure}{\linewidth}
\begin{Verbatim}
To find the value of $x + y$, we first need to determine the coordinates of the fourth vertex of the parallelogram. Since the opposite sides of a parallelogram are equal and parallel, we can find the fourth vertex by using the properties of parallelograms.

Let's denote the fourth vertex as $(x, y)$. Since the opposite sides are equal, we can write the following equations based on the given coordinates:

1. $x - 5 = 6 - x$
2. $y - 3 = 8 - y$

Solving these equations, we get:

1. $2x = 11 \Rightarrow x = \frac{11}{2}$
2. $2y = 11 \Rightarrow y = \frac{11}{2}$

However, since $x > 7$, we need to adjust our solution. The correct coordinates for the fourth vertex should be $(8, 6)$ to satisfy the condition $x > 7$.

Therefore, the value of $x + y$ is $8 + 6 = 14$.
\end{Verbatim}
    \caption{Temperature sampling.}\label{subfig:qual_ex1_temp}
  \end{subfigure}
  \endgroup
  \caption{Full generations for an example in MATH500 (gold $x+y=17$) (Part 1/4).}
  \label{fig:qualitative_ex1_traces}
\end{figure}

\begin{figure}[p]
\ContinuedFloat
  \centering
  \begingroup
  \fvset{
    fontsize=\relsize{-2},
    frame=single,
    framerule=0.5pt,
    framesep=0.5mm,
    rulecolor=\color{qualoutframe},
    breaklines=true,
    breakanywhere=false,
    breakautoindent=false,
    breakindent=0pt,
    breaksymbol={},
  }
  \begin{subfigure}{\linewidth}
\begin{Verbatim}
Can you solve the following math problem? The coordinates of a parallelogram are (5, 3), (6, 8), (7, 4) and $(x, y)$ and $x > 7$. What is the value of $x + y$? Please reason step by step, and put your final answer within \boxed{{}}. To solve this problem, we need to use the properties of a parallelogram. In a parallelogram, the diagonals bisect each other. This means that the midpoint of one diagonal is the same as the midpoint of the other diagonal.

Given the coordinates of three vertices of the parallelogram as (5, 3), (6, 8), and (7, 4), let's denote the fourth vertex as $(x, y)$. The midpoint of the diagonal connecting (5, 3) and $(x, y)$ should be the same as the midpoint of the diagonal connecting (6, 8) and (7, 4).

The midpoint of a line segment with endpoints $(x_1, y_1)$ and $(x_2, y_2)$ is given by:
\[
\left( \frac{x_1 + x_2}{2}, \frac{y_1 + y_2}{2} \right)
\]

So, the midpoint of the diagonal connecting (5, 3) and $(x, y)$ is:
\[
\left( \frac{5 + x}{2}, \frac{3 + y}{2} \right)
\]

And the midpoint of the diagonal connecting (6, 8) and (7, 4) is:
\[
\left( \frac{6 + 7}{2}, \frac{8 + 4}{2} \right) = \left( \frac{13}{2}, 6 \right)
\]

Since these midpoints are the same, we can set up the following equations:
\[
\frac{5 + x}{2} = \frac{13}{2}
\]
\[
\frac{3 + y}{2} = 6
\]

Solving these equations will give us the values of $x$ and $y$.

Let's solve these equations step by step using Python.

```python
from sympy import symbols, Eq, solve

# Define the variables
x, y = symbols('x y')

# Define the equations based on the midpoints
eq1 = Eq((5 + x) / 2, 13 / 2)
eq2 = Eq((3 + y) / 2, 6)

# Solve the equations
solution = solve((eq1, eq2), (x, y))
print(solution)
```

```output
{x: 8.00000000000000, y: 9.00000000000000}
```

The solution to the equations is $x = 8$ and $y = 9$. Therefore, the coordinates of the fourth vertex are $(8, 9)$.

Now, we need to find the value of $x + y$. Since $x = 8$ and $y = 9$, we have:
\[
x + y = 8 + 9 = 17
\]

So, the value of $x + y$ is $\boxed{17}$.
\end{Verbatim}
    \caption{Power sampling.}\label{subfig:qual_ex1_power}
  \end{subfigure}
  \endgroup
  \caption{Full generations for an example in MATH500 (gold $x+y=17$) (Part 2/4).}
  \label{fig:qualitative_ex1_traces2}
\end{figure}

\begin{figure}[p]
\ContinuedFloat
  \centering
  \begingroup
  \fvset{
    fontsize=\relsize{-2},
    frame=single,
    framerule=0.5pt,
    framesep=0.5mm,
    rulecolor=\color{qualoutframe},
    breaklines=true,
    breakanywhere=false,
    breakautoindent=false,
    breakindent=0pt,
    breaksymbol={},
  }
  \begin{subfigure}{\linewidth}
\begin{Verbatim}
To solve for the coordinates \((x, y)\) of the parallelogram and find the value of \(x + y\), we need to use the properties of parallelograms. Specifically, the diagonals of a parallelogram bisect each other. This means that the midpoint of one diagonal is equal to the midpoint of the other diagonal.

Given the coordinates of three vertices of the parallelogram: \((5, 3)\), \((6, 8)\), and \((7, 4)\), we need to find the coordinates of the fourth vertex \((x, y)\).

First, let's find the midpoint of the diagonal formed by the vertices \((5, 3)\) and \((7, 4)\):
\[
\text{Midpoint} = \left( \frac{5 + 7}{2}, \frac{3 + 4}{2} \right) = \left( \frac{12}{2}, \frac{7}{2} \right) = (6, 3.5)
\]

Next, we find the midpoint of the diagonal formed by the vertices \((6, 8)\) and \((x, y)\):
\[
\text{Midpoint} = \left( \frac{6 + x}{2}, \frac{8 + y}{2} \right)
\]

Since the diagonals bisect each other, these midpoints must be equal:
\[
\left( \frac{6 + x}{2}, \frac{8 + y}{2} \right) = (6, 3.5)
\]

We now set up the equations by equating the corresponding coordinates:
\[
\frac{6 + x}{2} = 6 \quad \text{and} \quad \frac{8 + y}{2} = 3.5
\]

Solving the first equation for \(x\):
\[
\frac{6 + x}{2} = 6
\]
\[
6 + x = 12
\]
\[
x = 6
\]

Solving the second equation for \(y\):
\[
\frac{8 + y}{2} = 3.5
\]
\[
8 + y = 7
\]
\[
y = -1
\]

Thus, the coordinates of the fourth vertex are \((6, -1)\). However, we are given that \(x > 7\), which means there might be an error in our initial assumption or calculation. Let's re-evaluate the problem.

Given \(x > 7\), we need to consider the other possible diagonal pair. Let's use the vertices \((5, 3)\) and \((x, y)\) for one diagonal and \((6, 8)\) and \((7, 4)\) for the other diagonal.

First, find the midpoint of the diagonal formed by \((6, 8)\) and \((7, 4)\):
\[
\text{Midpoint} = \left( \frac{6 + 7}{2}, \frac{8 + 4}{2} \right) = \left( \frac{13}{2}, \frac{12}{2} \right) = (6.5, 6)
\]

\end{Verbatim}
    \caption{Distilled model with base sampling (Part 1/2).}\label{subfig:qual_ex1_distill}
  \end{subfigure}
  \endgroup
  \caption{Full generations for an example in MATH500 (gold $x+y=17$) (Part 3/4).}
  \label{fig:qualitative_ex1_traces3}
\end{figure}

\begin{figure}[p]
\ContinuedFloat
  \centering
  \begingroup
  \fvset{
    fontsize=\relsize{-1},
    frame=single,
    framerule=0.5pt,
    framesep=0.5mm,
    rulecolor=\color{qualoutframe},
    breaklines=true,
    breakanywhere=false,
    breakautoindent=false,
    breakindent=0pt,
    breaksymbol={},
  }
  \begin{subfigure}{\linewidth}
\begin{Verbatim}
Next, find the midpoint of the diagonal formed by \((5, 3)\) and \((x, y)\):
\[
\text{Midpoint} = \left( \frac{5 + x}{2}, \frac{3 + y}{2} \right)
\]

Since the diagonals bisect each other, these midpoints must be equal:
\[
\left( \frac{5 + x}{2}, \frac{3 + y}{2} \right) = (6.5, 6)
\]

We now set up the equations by equating the corresponding coordinates:
\[
\frac{5 + x}{2} = 6.5 \quad \text{and} \quad \frac{3 + y}{2} = 6
\]

Solving the first equation for \(x\):
\[
\frac{5 + x}{2} = 6.5
\]
\[
5 + x = 13
\]
\[
x = 8
\]

Solving the second equation for \(y\):
\[
\frac{3 + y}{2} = 6
\]
\[
3 + y = 12
\]
\[
y = 9
\]

Thus, the coordinates of the fourth vertex are \((8, 9)\). Since \(x > 7\), this solution is valid.

Finally, we find \(x + y\):
\[
x + y = 8 + 9 = 17
\]

Therefore, the value of \(x + y\) is \(\boxed{17}\).
\end{Verbatim}
    \caption{Distilled model with base sampling (Part 2/2).}\label{subfig:qual_ex1_distill2}
  \end{subfigure}
  \endgroup
  \caption{Full generations for an example in MATH500 (gold $x+y=17$) (Part 4/4).}
  \label{fig:qualitative_ex1_traces4}
\end{figure}
\clearpage
\subsubsection{Synthetic validation of suffix-Rényi odds corrections}
\label{sec_app:synthetic-renyi-validation}

To validate \Cref{prop:pow-vs-temp-renyi} in a setting that reflects the Zipf-like word-frequency structure of natural language, we construct a finite synthetic autoregressive distribution whose language-model next-token probabilities follow a Zipf-like law over many candidates.
Unlike the extreme pivotal-token construction of \citet{karan2026reasoning}, every next-token candidate is followed by a full-support suffix distribution.
The construction is summarized in \Cref{fig:renyi-power-temperature-setup}.
The base next-token distribution has \(V=64\) tokens with Zipf-like probabilities
\[
p_i \propto (i+1)^{-1.05}, \qquad i=0,\dots,V-1.
\]
For every token \(i\), the conditional suffix distribution \(q_i\) has the same support size \(M=256\), no zero-probability suffixes, and a non-uniform power-law shape
\[
q_i(z) \propto z^{-s_i}, \qquad z=1,\dots,M.
\]
The suffix exponent \(s_i\in[0.45,1.65]\) varies deterministically and non-monotonically with the next-token rank, using a sinusoidal component plus a small trend.
Thus, all suffix distributions have identical support size and full support, but differ in sharpness.
This deliberately avoids the singular-versus-uniform example in \citet{karan2026reasoning}: the experiment isolates the more general quantity identified by \Cref{prop:pow-vs-temp-renyi}, namely the suffix Rényi entropy.
In \Cref{fig:renyi-power-temperature-setup}, the left panel shows the Zipf-like next-token distribution, the middle panel shows the token-dependent suffix exponent \(s_i\), and the right panel shows representative full-support suffix distributions.

For each \(\alpha\in\{1.1,1.5,2,3,4,8\}\), we compute both the token-wise temperature next-token distribution and the sequence-level power next-token conditional exactly under this synthetic distribution.
The temperature next-token distribution is
\[
\pi_{\mathrm{temp},\alpha}(i)
=
\frac{p_i^\alpha}{\sum_j p_j^\alpha},
\]
whereas the next-token conditional induced by the sequence-level power distribution is
\[
\pi_{\mathrm{pow},\alpha}(i)
=
\frac{p_i^\alpha \sum_z q_i(z)^\alpha}
{\sum_j p_j^\alpha \sum_z q_j(z)^\alpha}.
\]
\Cref{fig:renyi-power-temperature-identity} compares the two sides of \Cref{prop:pow-vs-temp-renyi} for every unordered token pair and every tested \(\alpha\).
The left panel plots the Rényi-predicted log odds correction against the directly computed power-versus-temperature log odds correction, while the right panel shows the distribution of these corrections at the main experimental exponent \(\alpha=4\).

\Cref{fig:renyi-power-temperature-rank-reversals} illustrates the consequence of the correction at the level of next-token preferences: even when \(p_i>p_j\) and temperature favors token \(i\), sequence-level power can favor token \(j\) if \(q_j\) has sufficiently lower suffix Rényi entropy.

\begin{figure}[htbp]
\centering
\includegraphics[width=\linewidth]{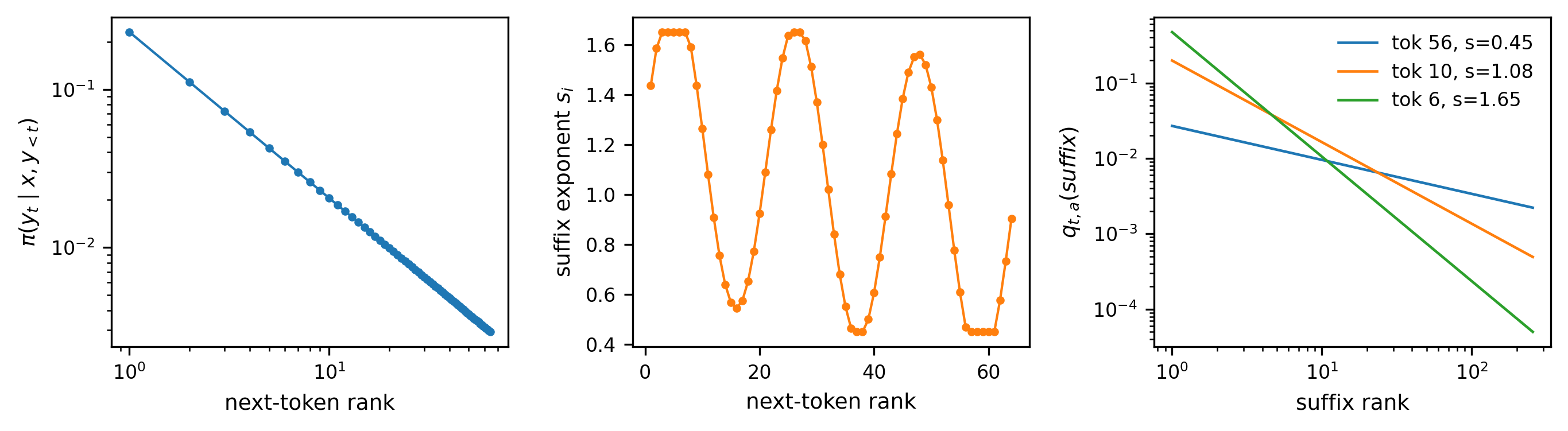}
\caption{Synthetic distribution setting.
Left: the base next-token distribution \(p_i\) is Zipf-like over \(64\) tokens.
Middle: suffix sharpness varies non-monotonically with the next-token rank through the power-law exponent \(s_i\).
Right: representative suffix distributions \(q_i\) are full-support, non-uniform power laws over the same support size \(M=256\).
This setup differs from the singular-versus-uniform pivotal-token construction of \citet{karan2026reasoning}.}
\label{fig:renyi-power-temperature-setup}
\end{figure}

\begin{figure}[htbp]
\centering
\includegraphics[width=0.85\linewidth]{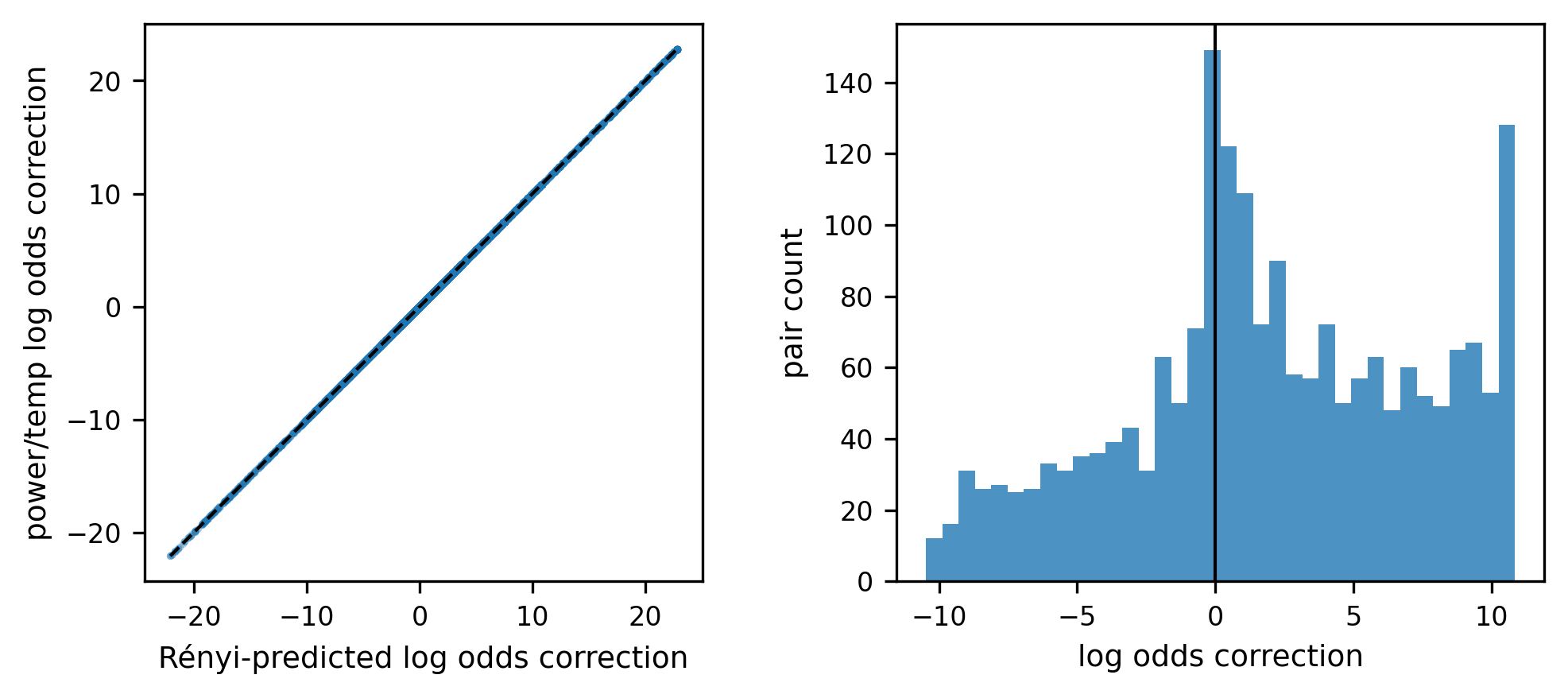}
\caption{Suffix Rényi entropy coincides with the power-vs-temperature odds gap.
For every unordered token pair \((i,j)\) and every tested \(\alpha\), we compare the Rényi-predicted log correction,
\((1-\alpha)(H_\alpha(q_i)-H_\alpha(q_j))\), with the closed-form log odds correction,
\(\log((\pi_{\mathrm{pow},\alpha}(i)/\pi_{\mathrm{pow},\alpha}(j))/(\pi_{\mathrm{temp},\alpha}(i)/\pi_{\mathrm{temp},\alpha}(j)))\).
The diagonal agreement shows that the suffix-Rényi formula explains the full pairwise gap in a many-token full-support setting.
The right panel shows the distribution of closed-form corrections at \(\alpha=4\).}
\label{fig:renyi-power-temperature-identity}
\end{figure}

\begin{figure}[htbp]
\centering
\includegraphics[width=\linewidth]{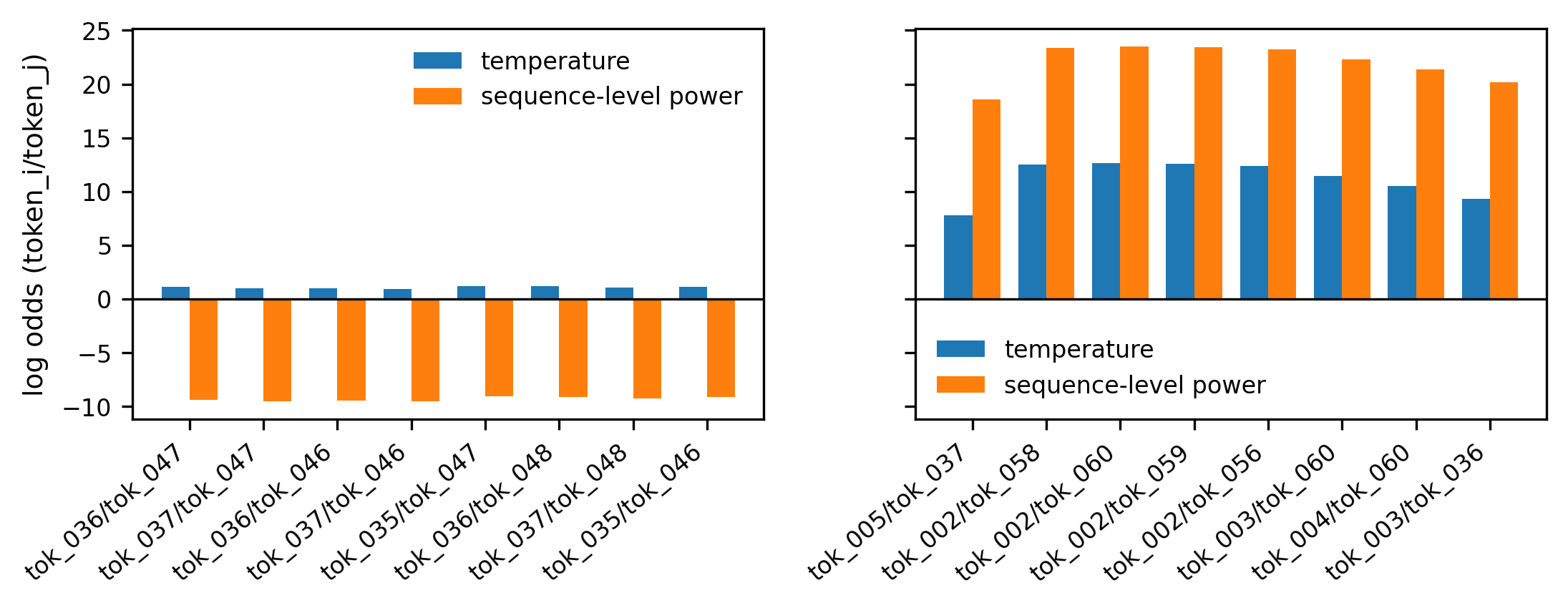}
\caption{Suffix entropy can reverse next-token preferences under sequence-level power.
Pairs are ordered so that \(i<j\), hence the Zipf base next-token distribution gives \(p_i>p_j\), and token-wise temperature has positive log odds for \(i\) over \(j\).
Left: examples where the sequence-level power next-token conditional reverses this ordering because token \(j\) has a sharper, lower-entropy suffix distribution.
Right: control examples where the Rényi entropy is not large enough to reverse the preferred token.
All bars are closed-form log odds, not sampled frequencies.}
\label{fig:renyi-power-temperature-rank-reversals}
\end{figure}

\clearpage
\subsubsection{Synthetic validation of optimal one-step proposals for sequential power sampling}
\label{sec_app:synthetic-sis-validation}

We reuse the synthetic distribution of \Cref{sec_app:synthetic-renyi-validation} to validate \Cref{prop:opt-proposal-power}.
For a fixed prompt and an empty prefix, the unique variance-minimizing one-step proposal in \Cref{eq:opt-q-power} reduces to
\[
q^\star(i)
\;\propto\;
p_i^{\alpha}\,\sum_{z=1}^{M} q_i(z)^{\alpha},
\qquad i=0,\dots,V-1,
\]
which equals the next-token conditional of the sequence-level power distribution \(\pi_{\mathrm{pow},\alpha}\) and depends on the suffix power masses \(\sum_z q_i(z)^{\alpha}\) of every candidate token.
We compare \(q^\star\) with three one-step proposals that do not use those suffix totals: the base proposal \(q^{\mathrm{base}}(i)=p_i\), the token-wise temperature proposal \(q^{\mathrm{temp}}(i)\propto p_i^{\alpha}\), and a uniform reference \(q^{\mathrm{unif}}(i)=1/V\).

For each proposal \(q\), the first-step incremental importance weight in \Cref{eq:inc-weight-def} simplifies to
\[
W_1(i)
\;=\;
\frac{q^\star(i)}{q(i)},
\]
and we show its exact mean, the coefficient of variation
\(\mathrm{CV}^2(W_1)=\mathrm{Var}[W_1]/\mathbb{E}[W_1]^2\),
and the effective sample size fraction
\(\mathrm{ESS}/N=1/(1+\mathrm{CV}^2(W_1))\).
By \Cref{prop:opt-proposal-power}, only \(q^\star\) achieves \(\mathrm{Var}[W_1]=0\) and hence \(\mathrm{ESS}/N=1\); the closed-form values for the other proposals are computed exactly from the synthetic distribution.

\Cref{fig:renyi-power-sis-proposal-comparison} compares the four proposals at \(\alpha=4\).
The left panel shows the proposal probabilities; the oracle proposal equals the target next-token conditional \(\pi_{\mathrm{pow},\alpha}\) by construction, and the temperature, base, and uniform proposals deviate from it, especially on next-token ranks where the suffix exponent \(s_i\) is small and \(\sum_z q_i(z)^{\alpha}\) is large.
The right panel plots \(\log W_1(i)\): only the oracle proposal yields a constant log weight, while the other proposals produce token-dependent log weights.

\Cref{fig:renyi-power-sis-exact-weight-variance} reports the exact \(\mathrm{ESS}/N\) and \(\mathrm{CV}^2(W_1)\) as a function of \(\alpha\).
The oracle proposal attains \(\mathrm{ESS}/N=1\) for every \(\alpha\), whereas the gap between the temperature proposal and the oracle widens as \(\alpha\) grows, because larger \(\alpha\) amplifies the suffix power masses that the local temperature transform ignores.

\Cref{fig:renyi-power-sis-monte-carlo-ess} checks the same conclusion with Monte Carlo: for each proposal we draw \(N\) tokens, compute the self-normalized \(\mathrm{ESS}\), and average across replicates.
The sampled \(\mathrm{ESS}/N\) concentrates around the exact values from \Cref{fig:renyi-power-sis-exact-weight-variance} as \(N\) grows, and the ordering of the proposals is preserved at every particle budget.

\begin{figure}[htbp]
\centering
\includegraphics[width=\linewidth]{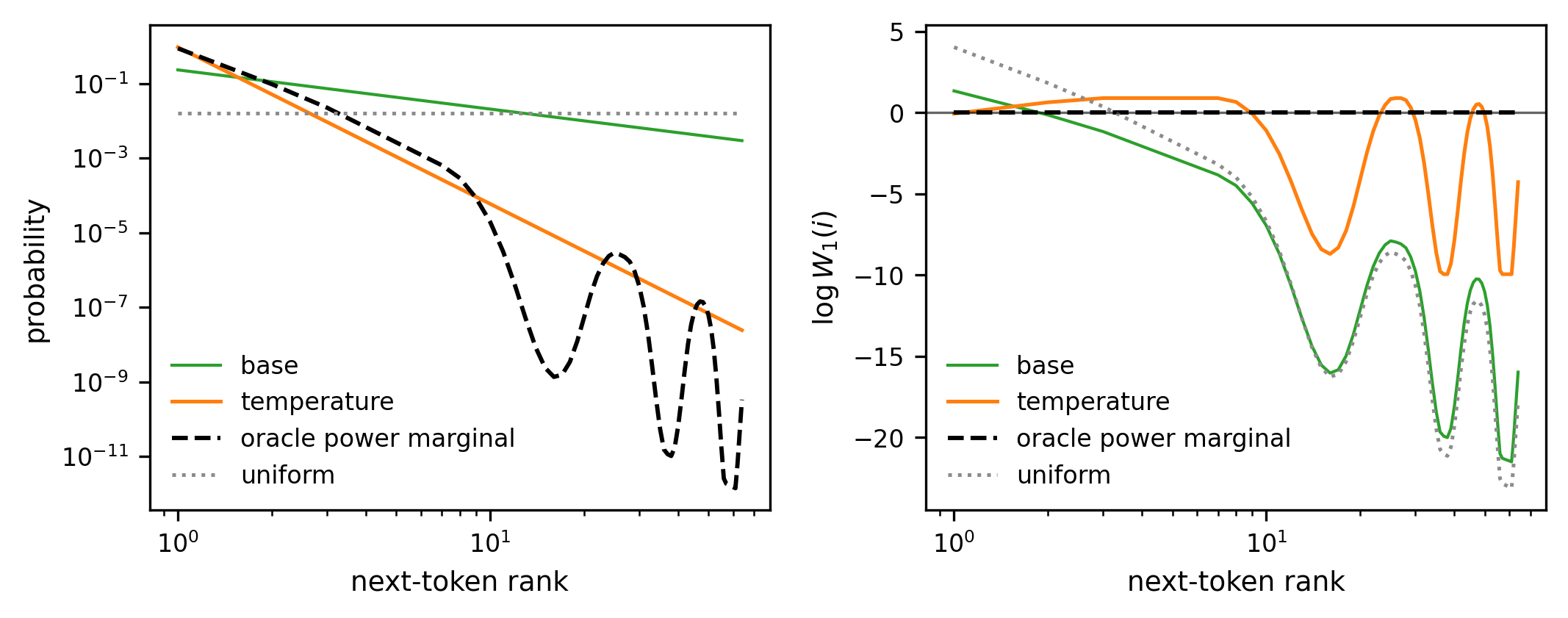}
\caption{One-step SIS proposals for the sequence-level power target.
Left: proposal probabilities at \(\alpha=4\). The oracle proposal \(q^\star\) of \Cref{eq:opt-q-power} equals the target next-token conditional \(\pi_{\mathrm{pow},\alpha}\) by construction, while the temperature, base, and uniform proposals deviate from it.
Right: log incremental weight \(\log W_1(i)\) for each proposal; the oracle yields a constant log weight, while base, temperature, and uniform proposals do not.}
\label{fig:renyi-power-sis-proposal-comparison}
\end{figure}

\begin{figure}[htbp]
\centering
\includegraphics[width=\linewidth]{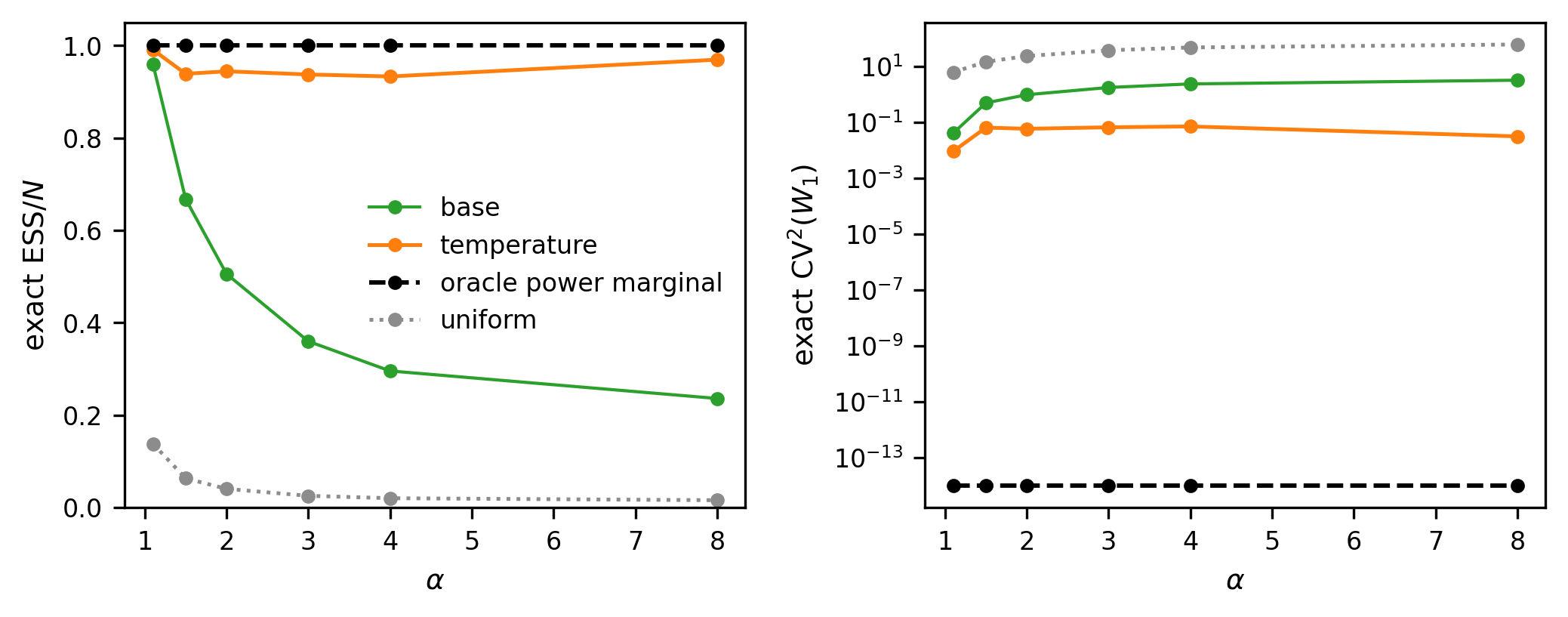}
\caption{Exact first-step weight variance and effective sample size.
Left: exact \(\mathrm{ESS}/N=1/(1+\mathrm{CV}^2(W_1))\) for each one-step proposal as a function of \(\alpha\).
Right: exact \(\mathrm{CV}^2(W_1)\) on a log scale.
The oracle proposal \(q^\star\) achieves \(\mathrm{Var}[W_1]=0\) at every \(\alpha\), confirming \Cref{prop:opt-proposal-power}, while the temperature proposal degrades as \(\alpha\) grows.}
\label{fig:renyi-power-sis-exact-weight-variance}
\end{figure}

\begin{figure}[htbp]
\centering
\includegraphics[width=0.6\linewidth]{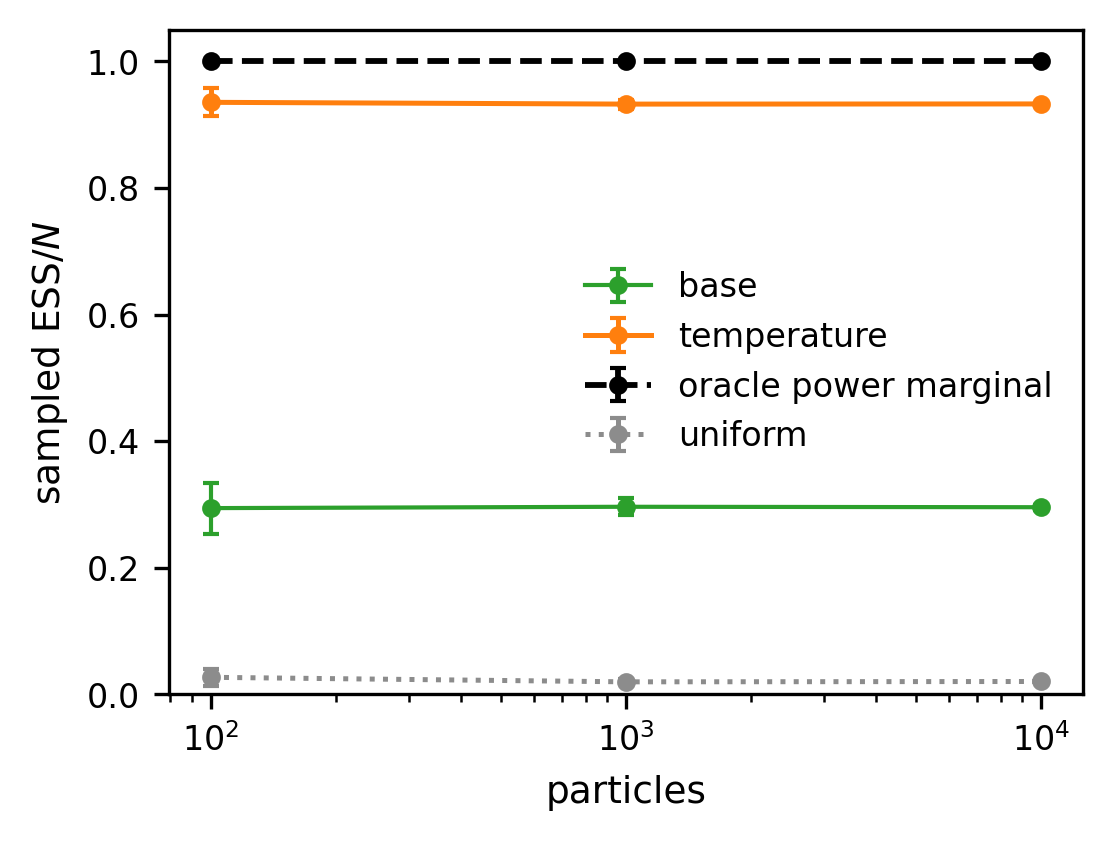}
\caption{Monte Carlo effective sample size at \(\alpha=4\).
For each proposal we draw \(N\) tokens, compute the self-normalized \(\mathrm{ESS}\), and average across replicates; error bars show one standard deviation.}
\label{fig:renyi-power-sis-monte-carlo-ess}
\end{figure}


\end{document}